\let\NAT@parse\undefined
\newtheorem{definition}{\indent
 Definition}
\theoremstyle{remark}
\newtheorem{theorem}{\indent Theorem}
\newtheorem{assumption}{\indent Assumption}
\newtheorem{lemma}{Lemma}
\title{\LARGE \bf
Certificated Actor-Critic: Hierarchical Reinforcement Learning with Control Barrier Functions for Safe Navigation
}
\author{Junjun Xie$^{1\dagger}$, Shuhao Zhao$^{1\dagger}$, Liang  Hu$^{1*}$ and Huijun Gao$^{2*}$
\thanks{$^{\dagger}$ Equal Contribution, $^{*}$ Corresponding Authors.}
\thanks{$^{1}$J. Xie, S. Zhao and L.~Hu are with the Department
of Automation, School of Mechanical Engineering and Automation, Harbin Institute of Technology, Shenzhen,
China.}
\thanks{$^{2}$H. Gao is with Research Institute of Intelligent Control
 and Systems, Harbin Institute of Technology, Harbin, China.}
}
\begin{document}

\maketitle
\thispagestyle{empty}
\pagestyle{empty}

\begin{abstract}
Control Barrier Functions (CBFs) have emerged as a prominent approach to designing safe navigation systems of robots. Despite their popularity, current CBF-based methods exhibit some limitations: optimization-based safe control techniques tend to be either myopic or computationally intensive, and they rely on simplified system models; conversely, the learning-based methods suffer from the lack of quantitative indication in terms of navigation performance and safety. In this paper, we present a new model-free reinforcement learning algorithm called Certificated Actor-Critic (CAC), which introduces a hierarchical reinforcement learning framework and well-defined reward functions derived from CBFs. We carry out theoretical analysis and proof of our algorithm, and propose several improvements in algorithm implementation. Our analysis is validated by two simulation experiments, showing the effectiveness of our proposed CAC algorithm. 
\end{abstract}

\begin{figure*}[t]
\centering
\includegraphics[scale=0.4]{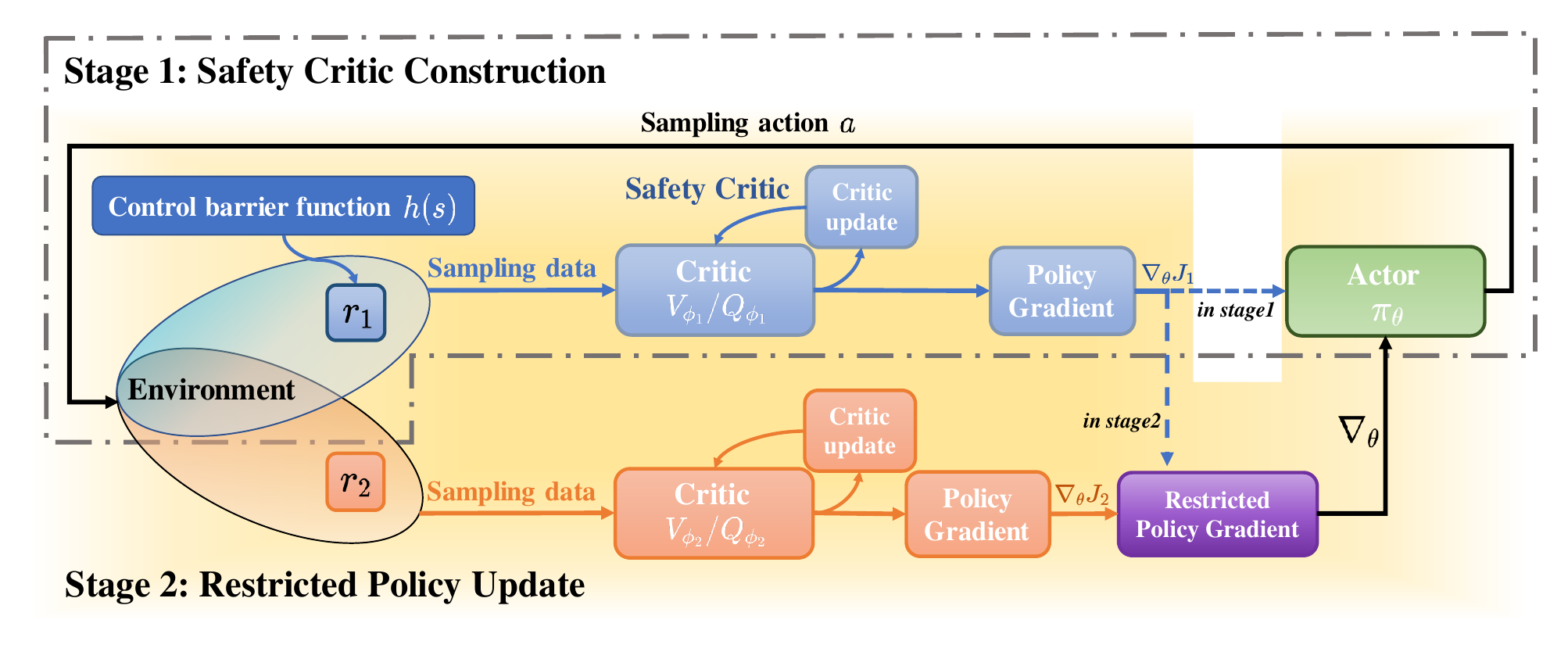}
\vspace{-3mm}
\caption{The framework of certificated actor-critic.}
\vspace{-3mm}
\label{fig:overview}
\end{figure*}

\section{Introduction}\label{sec_intro}
With the increasing deployment of autonomous robots and self-driving vehicles in the real world, the safety of autonomous navigation, e.g., collision avoidance with other users in shared spaces, has emerged as a common societal concern. Recently, the control barrier functions (CBFs) have been employed as a promising technique to address safety issues in robot navigation, either by being incorporated into traditional control and optimization frameworks or by being adapted into model-free learning approaches. 

Building on control theory and optimization, many safe robot navigation approaches have been proposed. A typical paradigm is to divide navigation tasks into path planning and control \cite{gao1,fast2}, with CBFs to ensure safety during path planning \cite{path1,path2,path3}. Another paradigm is to design the navigation controller directly \cite{control1,control2}, in which classic control algorithms are combined with CBF in an optimization framework, such as quadratic programming (QP) \cite{bipedal1,rss1,mobilerobots} and model predictive control (MPC) \cite{MPC-D-CBF}. However, QP-based methods are shortsighted due to pointwise optimization, resulting in locally optimal solutions or even a failed controller \cite{shortview}. On the other hand, MPC mitigates the shortcomings of QP-based method but at the expense of a higher computational burden. In addition, the model-based safe control design usually requires a simplified explicit system model, which limits its application in navigating complex environments.

Reinforcement learning (RL) has long been a popular technique for robot navigation, particularly in scenarios where a model-free approach is required. Recently, researchers have explored various strategies to integrate CBFs with RL to enhance navigation safety \cite{panwei,lidar,cpo}. Some methods impose CBFs as constraints on the learnt policy via a non-invasive approach \cite{aaai_rlcbf,safeRL-cbf,taylor2020learning}. That is, CBFs are used as a ``safety filter'' to adjust the trained policy via QP with CBF constraints. This approach is easy for implementation and has the merits of rigorous safety guarantees, but the adjustment on the origin policy might degrade its performance in an unpredictable approach. Another alternative direction is reward shaping using CBF \cite{ref1,ref2} in which safety is combined together with other reward terms. Since the final reward function is a trade-off between safety and other objectives, the safety requirement cannot be guaranteed, and it lacks quantitative analysis and explainability in terms of safety.

To overcome the limitation in existing methods, we propose a novel model-free reinforcement learning algorithm called \textbf{Certificated Actor-Critic}. Our algorithm features a CBFs-derived reward function that provides a quantitative assessment of navigation safety. Furthermore, we develop a hierarchical RL framework that learns a safe policy first and then refines it at the next stage to achieve fast goal-reaching without compromising safety. 

The main contributions of the paper are threefold:
\begin{enumerate}
    \item We propose a model-free reinforcement learning algorithm certificated Actor-Critic including a CBFs-derived reward function, which can quantitatively estimate the safety of the policies and states;
    \item We design a hierarchical framework that accommodates safety and goal-reaching objectives in robot navigation, and improve its goal-reaching capability yet maintaining safety via novel restricted policy update strategies;
    \item We conduct two experiments with detailed comparative analysis, showing the effectiveness of our proposed algorithm.
\end{enumerate}

\section{Preliminaries}\label{sec_preliminaries}
Consider an infinite-horizon Markov decision process (MDP)  \cite{sutton2018reinforcement} with a discrete-time stochastic system \begin{equation}\label{eq_discretesystem}
    s_{t+1}=\mathcal{F}(s_t,a_t)
\end{equation}
which can be defined concisely as a tuple $\left<\mathcal{S},\mathcal{A},\mathcal{F},R,\gamma\right>$, where $\mathcal{S}$ is a set of states satisfied $s_t\in\mathcal{S}\subseteq\mathbb{R}^n$, $\mathcal{A}$ is a set of actions satisfied $a_t\in \mathcal{A}\subseteq\mathbb{R}^m$, $\mathcal{F}:\mathcal{S}\times\mathcal{A}\to \mathcal{S}$ is a possibly dynamics, $R:\mathcal{S}\times\mathcal{A}\to \mathbb{R}$ is a reward function, and $\gamma\in\left[0,1\right]$ is a discount factor.
\subsection{Reinforcement Learning}
Reinforcement learning aims to learn a policy via interaction between environment and agent with the dynamic model \eqref{eq_discretesystem}. The cumulative reward is defined as $G_t=\sum_{k=0}^\infty\gamma ^k R_{t+k}$. With a policy $\pi$, state value function $V^\pi(s)$ and action-value function $Q^\pi(s,a)$ are defined as
\begin{equation}
\begin{aligned}
    V^\pi(s)&=\mathbb{E}_\pi[G_t|S_t=s],\\
    Q^\pi(s,a)&=\mathbb{E}_\pi[G_t|S_t=s,A_t=a].
    \end{aligned}
\end{equation}
The objective is to find an optimal policy $\pi^*$ to make $V^{\pi^*}(s)\geq V^{\pi}(s),\forall \pi,s\in \mathcal{S}$.

\subsection{Discrete-time Control Barrier Functions}

Consider the discrete-time system \eqref{eq_discretesystem}, we have the following definitions and lemma:
\begin{definition}\label{def_Safeset}
(Safe Set \cite{CBF_2014CDC,CBF_2017TAC,CBF_2019ECC}) The \textbf{safe set} $\mathcal{C}$ is defined as the zero-superlevel set of a smooth function $h:\mathbb{R}^n \to \mathbb{R}$, i.e.,
    \begin{equation}\label{safeset}
        \begin{aligned}
            \mathcal{C} & =\left\{s \in \mathbb{R}^{n}: h(s) \geq 0\right\}, \\
            \partial \mathcal{C} & =\left\{s \in \mathbb{R}^{n}: h(s)=0\right\}, \\
            \operatorname{Int}(\mathcal{C}) & =\left\{s \in \mathbb{R}^{n}: h(s)>0\right\}.
        \end{aligned}
    \end{equation}    
\end{definition}
\begin{definition}\label{def_DT-CBF}
    (Discrete-time Control Barrier Function \cite{DT-CBF_2021ACC}) The function $h:\mathbb{R}^n\rightarrow\mathbb{R}$ is called a \textbf{Discrete-time Control Barrier Function} for the system \eqref{eq_discretesystem} if there exists $\alpha$ such that
    \begin{equation}\label{DT-CBF_constraint}
        \sup _{a_t \in \mathcal{A}}\left[h(\mathcal{F}(s_t,a_t))+(\alpha-1)h(s_t)\right]\geq 0, 0<\alpha<1.
    \end{equation}
\end{definition}

\begin{lemma}\label{thm_forward-invariant} 
(Forward Invariant) The system \eqref{eq_discretesystem} is \textbf{forward invariant} in safe set $\mathcal{C}$, i.e.,
\begin{equation}\label{eq_forward-invariant}
    \forall s_{t}\in \mathcal{C} \Rightarrow s_{t+1}\in \mathcal{C}
\end{equation}
if 
\begin{equation}\label{eq_safe}
    h(s_{t+1})+(\alpha-1)h(s_t)\geq 0,0<\alpha<1.
\end{equation}
\end{lemma}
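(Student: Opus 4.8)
The plan is to argue directly from the defining inequality \eqref{eq_safe}, with a short induction to propagate the one-step claim forward in time. First I would fix an arbitrary time $t$ and suppose $s_t \in \mathcal{C}$, which by \autoref{def_Safeset} means $h(s_t) \geq 0$. Rearranging \eqref{eq_safe} gives $h(s_{t+1}) \geq (1-\alpha)h(s_t)$. Since $0 < \alpha < 1$ we have $1-\alpha > 0$, and combined with $h(s_t)\geq 0$ this yields $(1-\alpha)h(s_t) \geq 0$, hence $h(s_{t+1}) \geq 0$, i.e. $s_{t+1}\in\mathcal{C}$. This establishes the one-step implication in \eqref{eq_forward-invariant}.

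To conclude forward invariance in the usual trajectory sense — namely that $s_{t_0}\in\mathcal{C}$ implies $s_t\in\mathcal{C}$ for all $t\geq t_0$ — I would then invoke induction on $t$: the base case is the hypothesis $s_{t_0}\in\mathcal{C}$, and the inductive step is exactly the one-step argument above applied at each time, using that \eqref{eq_safe} is assumed to hold along the trajectory generated by \eqref{eq_discretesystem}. Each step only needs the sign facts $1-\alpha>0$ and $h(s_t)\geq 0$ from the inductive hypothesis, so the induction closes immediately.

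The key steps, in order, are: (i) translate $s_t\in\mathcal{C}$ into $h(s_t)\geq 0$ via \autoref{def_Safeset}; (ii) rearrange \eqref{eq_safe} to isolate $h(s_{t+1})$; (iii) use $0<\alpha<1$ to conclude the lower bound is nonnegative; (iv) read off $s_{t+1}\in\mathcal{C}$; and (v) iterate by induction. There is no real analytical obstacle here — the argument is essentially a sign check — so the only thing to be careful about is stating the hypotheses cleanly, in particular that the inequality \eqref{eq_safe} is to hold at every relevant time step (equivalently, that some admissible $a_t$ achieving the supremum in \eqref{DT-CBF_constraint} is selected), which is what makes the inductive propagation legitimate rather than just a single-step statement.
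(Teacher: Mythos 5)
Your argument is correct and matches the paper's proof essentially verbatim: both translate $s_t\in\mathcal{C}$ into $h(s_t)\geq 0$ via \autoref{def_Safeset}, rearrange \eqref{eq_safe} to get $h(s_{t+1})\geq(1-\alpha)h(s_t)\geq 0$, and conclude $s_{t+1}\in\mathcal{C}$. The additional inductive extension to all future times is a harmless elaboration beyond the one-step claim the paper actually states and proves.
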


\begin{proof}
Since $s_t\in\mathcal{C}$, from \autoref{def_Safeset} we have $h(s_t)\geq 0$. If \eqref{eq_safe} holds on, then
$  h(s_{t+1})\geq (1-\alpha)h(s_t)\geq 0$,
which means $s_{t+1}\in\mathcal{C}$.
\end{proof}

\section{Problem Formulation}\label{sec_problem}
In this section, we consider a robot navigation task, and formulate the problem  as follows:

Given the robot dynamics \eqref{eq_discretesystem} with a proper control barrier function $h(s)$, we aim to learn a navigation policy such that:
\begin{enumerate}
    \item (\textit{Safety Guaranteed}) The robot should stay in the safe set defined in \autoref{def_Safeset} with the corresponding control barrier function $h(s)$;
    \item (\textit{Goal reaching}) The robot should reach its navigational goal while keeping in its safe conditions;
    \item (\textit{Safety Certificates}) A quantitative safety certificate should be provided, illustrating the level of safety assured by the navigation policy via numerical values.
\end{enumerate}

\begin{algorithm}[!t]
\caption{Certificated Actor-Critic}
\label{algo:CAC}
\begin{algorithmic}[1]
\State Design the control barrier function $h(s)$ for the system 
 \eqref{eq_discretesystem} with expected decay rate $\alpha_0$
\State \textit{\%Stage 1: Safety Critic Construction}
\State Set reward $r_1$ as \eqref{eq_expreward}, initialize the actor network $\pi_\theta$ with parameters $\theta$ and the critic network $Q_{\phi_1}$ or $V_{\phi_1}$ with parameters $\phi_1$ 
\State Define learning rate $\lambda_\theta,\lambda_{\phi_1}$, the loss function $J_1(\theta)$ for actor and $L_1(\phi_1)$ for critic with $r_1$
\For{each step in training} 
    \State $\theta\leftarrow\theta-\lambda_\theta\nabla_\theta J_1(\theta)$
    \State $\phi_1\leftarrow\phi_1-\lambda_{\phi_1}\nabla_{\phi_1} L_1(\phi_1)$
\EndFor
\State \textit{\%Stage 2: Restricted Policy Update}
\State Set reward $r_2$, initialize the critic network $Q_{\phi_2}$ or $V_{\phi_2}$ with parameters $\phi_2$ for navigation
\State Define learning rate $\lambda_{\phi_2}$, the loss function $J_2(\theta)$ for actor and $L_2(\phi_2)$ for critic with $r_2$ 
\For{each step in training}
    \State 
$\nabla_\theta\leftarrow\eqref{eq_restricted-policy-update}$
    
    \State
$\theta\leftarrow\theta-\lambda_\theta\nabla_\theta $
    \State $\phi_1\leftarrow\phi_1-\lambda_{\phi_1}\nabla_{\phi_1} L_1(\phi_1)$
    \State $\phi_2\leftarrow\phi_2-\lambda_{\phi_2}\nabla_{\phi_2} L_2(\phi_2)$
\EndFor
\State \textbf{Output:} $\theta$, $\phi_1$ and $\phi_2$
\end{algorithmic}
\end{algorithm}
\section{Certificated Actor-Critic}\label{sec_CAC}

In this section, we propose a hierarchical reinforcement learning framework called \textbf{\textit{Certificated Actor-Critic (CAC)}} that extends from the classic actor-critic architecture \cite{sutton2018reinforcement}. We decompose the entire navigation task into two sub-tasks that consider safety and goal-reaching respectively. Accordingly, two separate critics are used to update the actor in two stages, as shown in \autoref{algo:CAC} and \autoref{fig:overview}. 

Different from the common approach that combines multi-objectives into a single objective by weight coefficients, CAC considers safety and goal-reaching in two consecutive stages. In the first stage of \textbf{safety critic construction}, an initial policy is learnt with the reward of safety. Then in the second stage of \textbf{restricted policy update},  the obtained policy from the previous stage is further updated subject to both rewards of safety and that of goal-reaching, which improves goal-reaching performance yet without compromising safety. More significantly, due to the well-designed reward function, the critic trained based on rewards of safety also works as a safety certificate to evaluate safety of the learnt policy.

\subsection{Safety Critic Construction} \label{subsec:safetycritic}
In the first stage, we construct a safety critic and train a safe policy via a well-designed reward  of safety. 

Consider a system with a defined control barrier function $h(s)$ and expected decay rate $\alpha_0$. From \autoref{thm_forward-invariant}, if the action $a_t$ satisfies \eqref{eq_safe} at a particular safe state $s_t$, then the state at the next step $s_{t+1}=f(s_t,a_t)$ is still safe. Otherwise, if \eqref{eq_safe} is violated, we have $h(s_{t+1})+(\alpha_0-1)h(s_t)<0$. Hence, we define 
\begin{equation}\label{eq_reward-safe}
r_1(s_t,a_t)=\delta_h\triangleq \min (h(s_{t+1})+(\alpha_0-1)h(s_t),0)
\end{equation}
and regard $\delta_h$ as a \textbf{safety certificate} for action $a_t$, i.e., if $\delta_h(s_t,a_t)=0$, the system is safe at step $t$.

If every step in an episode satisfies $\delta_h=0$, the system is safe during the whole process. Motivated by this, we call the state value function $V_1^\pi$ and the action value function $Q_1^\pi$ \textbf{safety critics} due to their ability on safety evaluation, formally stated in  \autoref{thm_safety-critic}.

\begin{theorem}\label{thm_safety-critic}
\textbf{(Safety Critic)} Consider the system \eqref{eq_discretesystem} with reward function \eqref{eq_reward-safe}:
\begin{enumerate}
    \item The system \eqref{eq_discretesystem} is safe with policy $\pi$ from the initial state $s_0$ if $s_0\in \mathcal{C}$ and $V_1^\pi(s_0)=0$;
    \item The system \eqref{eq_discretesystem} is safe with policy $\pi$ with the initial state-action pair $(s_0, a_0)$ if $s_0\in \mathcal{C}$ and $Q_1^{\pi}(s_0,a_0)=0$.
\end{enumerate}
\end{theorem}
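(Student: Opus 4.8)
The plan is to exploit the one-sided sign structure of the safety reward \eqref{eq_reward-safe}. By construction $r_1(s_t,a_t)=\delta_h=\min\!\big(h(s_{t+1})+(\alpha_0-1)h(s_t),\,0\big)\le 0$ for every state-action pair, so every realized return is a sum of non-positive terms and hence $V_1^\pi(s)\le 0$ and $Q_1^\pi(s,a)\le 0$ for all $s,a$ and every policy $\pi$. This is the observation that converts the value-equals-zero hypotheses into pointwise constraints along trajectories.

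For claim (1), I would expand $V_1^\pi(s_0)=\mathbb{E}_\pi\big[\sum_{k=0}^\infty\gamma^k r_1(s_k,a_k)\mid s_0\big]=0$. Assuming $\gamma>0$, the weights $\gamma^k$ are strictly positive while each summand is non-positive, so the expectation of the series can vanish only if $r_1(s_k,a_k)=0$ for every $k\ge 0$ along almost every trajectory generated by $\pi$ from $s_0$. Unwinding $\delta_h$, the equality $r_1(s_k,a_k)=0$ is precisely $h(s_{k+1})+(\alpha_0-1)h(s_k)\ge 0$, which is condition \eqref{eq_safe} of \autoref{thm_forward-invariant} with $\alpha=\alpha_0$ at time $k$. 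I would then induct on $k$: the base case $s_0\in\mathcal{C}$ is assumed, and if $s_k\in\mathcal{C}$ then \autoref{thm_forward-invariant} together with $r_1(s_k,a_k)=0$ gives $s_{k+1}\in\mathcal{C}$. Hence $s_k\in\mathcal{C}$ for all $k$, i.e. the system is safe with $\pi$ from $s_0$.

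For claim (2), I would use the one-step Bellman identity $Q_1^\pi(s_0,a_0)=r_1(s_0,a_0)+\gamma\,\mathbb{E}_\pi\big[V_1^\pi(s_1)\mid s_0,a_0\big]$. Both terms on the right are non-positive — the first by the sign of $r_1$, the second since $V_1^\pi\le 0$ — so $Q_1^\pi(s_0,a_0)=0$ forces $r_1(s_0,a_0)=0$ and $V_1^\pi(s_1)=0$ almost surely. From $r_1(s_0,a_0)=0$ and $s_0\in\mathcal{C}$, \autoref{thm_forward-invariant} yields $s_1\in\mathcal{C}$; then $s_1\in\mathcal{C}$ together with $V_1^\pi(s_1)=0$ reduces to the setting of claim (1) started at $s_1$, so the trajectory from $s_1$ onward stays in $\mathcal{C}$, and appending $s_0\in\mathcal{C}$ establishes safety of the whole run.

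The step I expect to be the main obstacle is making the implication ``expectation of a non-positive, positively weighted series equals zero $\Rightarrow$ every term is zero almost surely'' fully rigorous in the stochastic-policy setting: this requires justifying the interchange of expectation and the infinite sum (e.g. by monotone convergence, since the partial sums are monotone and bounded above by $0$) and being explicit that $\gamma>0$ is essential, since for $\gamma=0$ only the first step would be constrained. Everything afterward — the equivalence $\delta_h=0 \iff$ \eqref{eq_safe} and the induction through \autoref{thm_forward-invariant} — is routine.
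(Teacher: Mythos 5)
Your proof is correct and follows essentially the same route as the paper: the non-positivity of $r_1$ forces every discounted reward term to vanish once the value is zero, and the forward-invariance lemma then propagates membership in $\mathcal{C}$ step by step from $s_0\in\mathcal{C}$. If anything you are more careful than the paper, which passes from $V_1^\pi(s_0)=0$ directly to ``the return of any episode is zero'' without flagging the expectation-to-almost-sure step (and the role of $\gamma>0$) that you make explicit, and which treats $Q_1^\pi$ by the same direct return argument rather than your one-step Bellman decomposition --- both are cosmetic differences, not a change of method.
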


\begin{proof}
    For 1), if $V_1^\pi(s_0)=0$, we have that the return of any episode from $s_0$ satisfies 
\begin{equation}
\label{profeq}
\sum_{k=0}^\infty\gamma ^k r_1(s_{k},a_{k})=0. 
\end{equation}
From \eqref{eq_reward-safe}, $r_1=\min (h(s_{t+1})+(\alpha_0-1)h(s_t),0)\leq 0$, so the unique solution is $r_1(s_{k},a_{k})=0,\forall k\geq 0$. From \autoref{thm_safety-critic}, $\forall k, s_k\in \mathcal{C}$, i.e., the system is always safe from $s_0$.  

For 2), if $Q_1^{\pi}(s_0,a_0)=0$, then the return of any episode choosing $a_0$ at initial state $s_0$ also satisfies \eqref{profeq}. Similarly, since $r_1\leq 0$, we have $\forall k, s_k\in \mathcal{C}$ from \autoref{thm_safety-critic}, and the system is always safe from $s_0$.
\end{proof}

According to \autoref{thm_safety-critic}, we can use safety critics: 1) to evaluate relative safety between policies; for example, if $V_1^{\pi_1}(s)\geq V_1^{\pi_2}(s)$ at all states, then $\pi_1$ is safer than $\pi_2$; 2) to determine the absolute safety about some states; if $v_\pi(s_0)=0$, then the system is absolutely safe from the initial state $s_0$. It is also worth emphasizing that even the optimal safe policy $\pi^*$ does not necessarily satisfy $V_1^{\pi^*}=0$ because of limited action space $\mathcal{A}$. For example, the robot is too close to avoid the obstacle, for which no irretrievable actions exist. 

After training, we derive an optimal safe policy $\pi^*_\text{safe}$, safety critics $V_1^{\pi^*}$ and $Q_1^{\pi^*}$. What remains is to update the policy for better goal-reaching behaviour, which is left for the next stage of restricted policy update.

\subsection{Restricted Policy Update}\label{subsec_policyupdate}
The core challenge in this stage is how to guarantee the safety of the system when policy is updated for better goal-reaching performance. That is, the values of safety critic do not decrease for all states in the second stage. Here, we make an assumption based on empirical observations that safe states and actions are usually non-unique. For example, in the lane-keeping task, the car could drive safely not only in the middle of the lane but also on the left or right slightly. The assumption is stated as below:
\begin{assumption}\label{asp_parameter-continuty}\textbf{
(Parameter Continuity of Safe Policies)} Given a parameterized safe policy $\pi_\theta$,  $\forall \delta>0$, $\exists\theta'$ subject to $\|\theta'-\theta\|\leq \delta$, then the policy $\pi_{\theta'}$ is safe as well.
\end{assumption}

\autoref{asp_parameter-continuty} implies that safe policies are not isolated but continuous in parametric space, and \autoref{fig:parameter_continuity} shows the relationship between safe policy and safe parameters. It leaves us spaces to update the policy for better goal-reaching performance while keeping safe.

\begin{figure}[!t]
\centering
\includegraphics[scale=0.25]{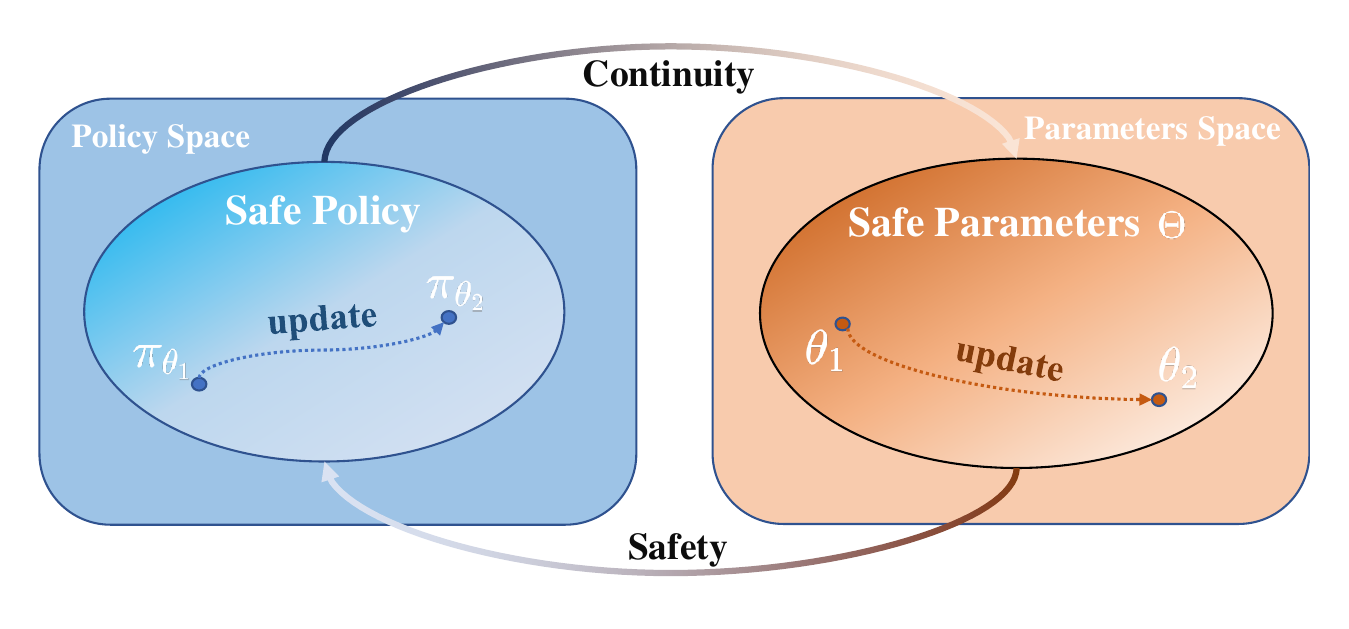}
\vspace{-2mm}
\caption{The relationship between safe policy and its parameters under \autoref{asp_parameter-continuty}. Since the corresponding parameters of similar safe policies  are continuous, parameters can be updated in a small neighbourhood, which guarantees the safety of policy.}
\label{fig:parameter_continuity}
\vspace{-4mm}
\end{figure}

In the second stage, we can set any appropriate reward function $r_2(s_t,a_t)$ for goal-reaching in navigation. For example, set
\begin{equation}\label{eq_reward-stable}
    r_2(s_t,a_t)=-\max(l(s_{t+1})+(\beta_0-1)l(s_t),0)
\end{equation}
where $l(s):\mathbb{R}^n\rightarrow\mathbb{R}$ is a control Lyapunov function (CLF) \cite{CLF_2012CDC,CLF_2014TAC} with an expected decay rate $\beta_0$. In the actor-critic framework, the critic provides the gradient $\nabla_\theta J(\theta)$ for actor (policy) to update. To keep safety critic do not decrease, we should determine the gradient depending on both safety critic and navigation critic. Specifically, we derive the gradient $\nabla_\theta$ using \textbf{restricted policy update} as
\begin{equation}\label{eq_restricted-policy-update}
\begin{aligned}
\nabla_\theta=&\mathop{\arg\max_{e}}e \cdot\nabla_\theta J_2(\theta)\\
\text{s.t.\indent}
&e\cdot \nabla_ \theta J_1(\theta)\geq0\\
&\Vert e\Vert\leq \Vert \nabla_\theta J_2(\theta)\Vert 
\end{aligned}
\end{equation}
where $J_1(\theta), J_2(\theta)$ are actor loss functions with reward $r_1,r_2$ respectively, and $\nabla$ is gradient operator. Under \autoref{asp_parameter-continuty}, the policy $\pi_\text{safe}^*$ will update and converge gradually to the final optimal policy $\pi^*$ along $\nabla_\theta$.

\vspace{-3mm}
\subsection{Practical Improvements}
So far, we have introduced the overall framework of our algorithm, which can be transplanted into any existing actor-critic architecture. However, for training in deep reinforcement learning where all actors and critics are represented as neural networks, we need to make a few improvements.

\subsubsection{Policy improvement}
Consider the parameterized actor network $\pi_\theta$, a widely used policy gradient method is to maximize the function $J(\theta)=\mathbb{E}[V^{\pi_\theta}(s)]$. Since the target is an expectation of state values, there may be cases that some values rise and others decline. On safety, it means some states become safer but others more dangerous. Hence, we improve the policy in the way soft actor-critic \cite{SAC_2018ICML,SAC_2018arxiv} does, that is using Kullback-Leibler divergence as $J(\theta)$ leads to
\begin{equation}\label{eq_softpolicyimproment}
    \pi_{\text {new }}=\arg \min _{\pi^{\prime} \in \Pi} D_{\mathrm{KL}}\left(\pi^{\prime}\left(\cdot \mid \mathbf{s}_{t}\right) \| \frac{\exp \left(Q^{\pi_{\text {old }}}\left(\mathbf{s}_{t}, \cdot\right)\right)}{Z^{\pi_{\text {old }}}\left(\mathbf{s}_{t}\right)}\right)
\end{equation}
and a significant conclusion is $Q^{\pi_\text{new}} (s,a) \geq Q^{\pi_\text{old}}(s,a)$ \cite{SAC_2018ICML,SAC_2018arxiv}. It shows that all states become safer after policy update.
\subsubsection{Exponential reward normalization}
We define $r_1$ in \eqref{eq_reward-safe}, and the best reward at each step is $0$. However, in finite episode, it leads to early termination, since the longer the episode is, the more negative the cumulative reward is. So we adjust reward function to
\begin{equation}\label{eq_expreward}
    r_1(s_t,a_t)=\exp(\min (h(s_{t+1})+(\alpha_0-1)h(s_t),0))
\end{equation}
and obviously $r_1\in (0,1]$. Besides, if there exists maximum length $k_\text{max}$ of an episode , then $V^\pi_1\leq \frac{1-\gamma^{k_\text{max}}}{1-\gamma}$, which means the critic network still can work as the safety critic with a bias $\frac{1-\gamma^{k_\text{max}}}{1-\gamma}$, i.e., if $V^\pi_1(s_0)= \frac{1-\gamma^{k_\text{max}}}{1-\gamma}$, then the system is safe from the initial state $s_0$.

\subsubsection{Gradient enhancement}
A restricted gradient is derived in \eqref{eq_restricted-policy-update} to guarantee safety performance during 
the second stage. Although $\nabla_\theta\cdot \nabla_\theta J_1(\theta)\geq0$ works theoretically, there exist two reasons for possible failure in algorithm implementation. One is there always needs a step length to update the parameters, and local gradient does not guarantee global convergence. Another is the true gradient $\nabla_\theta J_1(\theta)$ is unknown, and is replaced by the estimation $\widetilde{\nabla_\theta} J_1(\theta)$ derived from data. Hence, it is essential to enhance the constraint as $\nabla_\theta\cdot \widetilde{\nabla_\theta} J_1(\theta)\geq \delta$ or $\cos\{\nabla_\theta ,\widetilde{\nabla_\theta} J_1(\theta)\}\geq \delta,\delta>0$ where $\cos\{\cdot,\cdot\}$ represents the cosine of angle between two vector.

\section{Experiments}\label{sec:experiments}
In this section, we validate our CAC algorithm on two simulation experiments: a classic control task CartPole using Gymnasium \cite{gymnasium}, and an autonomous underwater vehicle (AUV)  navigation task using the simulator HoloOcean \cite{HoloOcean}.

\begin{figure}[!t]
\vspace{-4mm}
\begin{center}
\subfigure[Key frames with $\pi^*_\text{safe}$]{\includegraphics[width=0.45\linewidth]{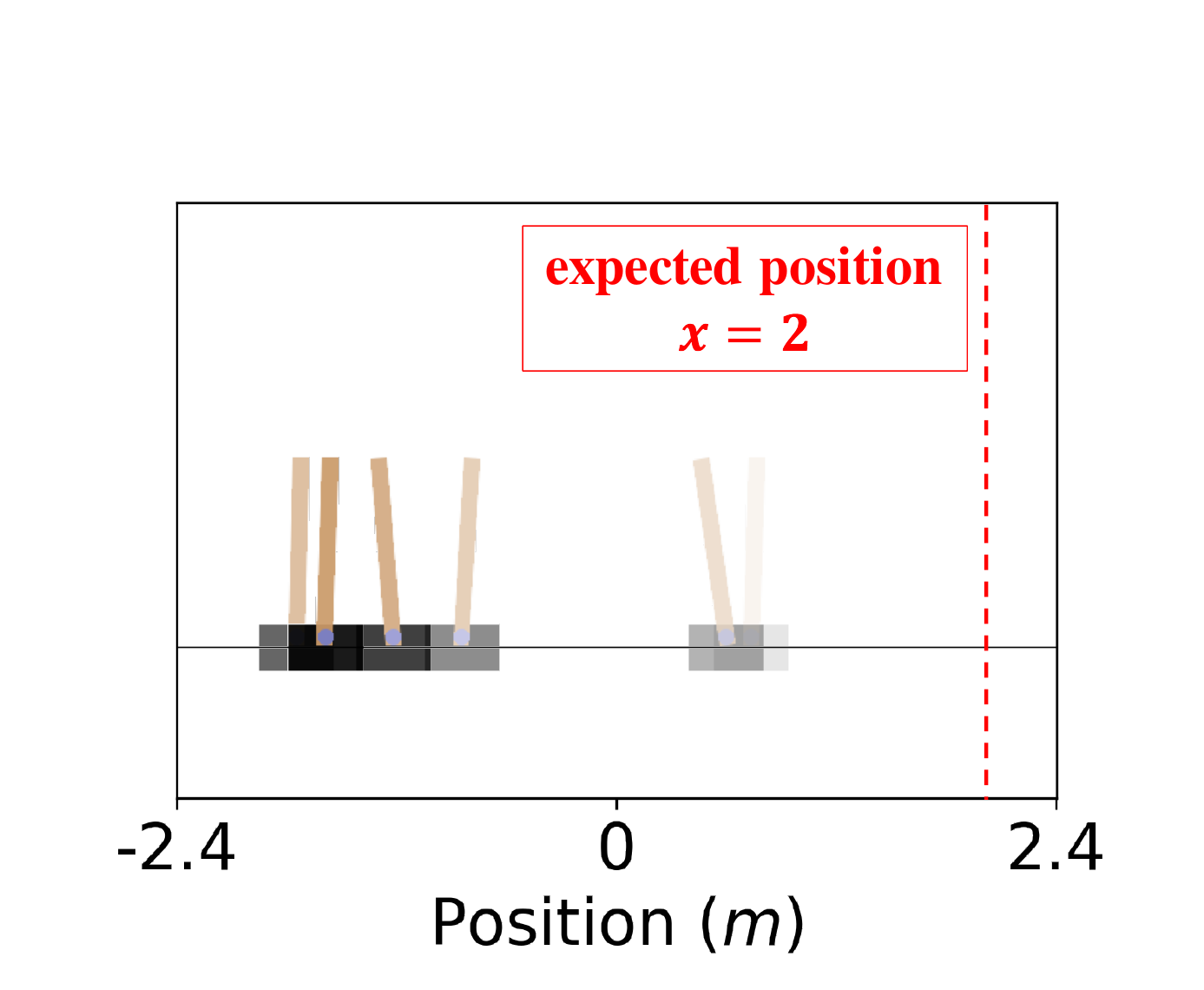}
\label{fig:cartpole_frames_step1}
}
\subfigure[Key frames with $\pi^*$]{\includegraphics[width=0.45\linewidth]{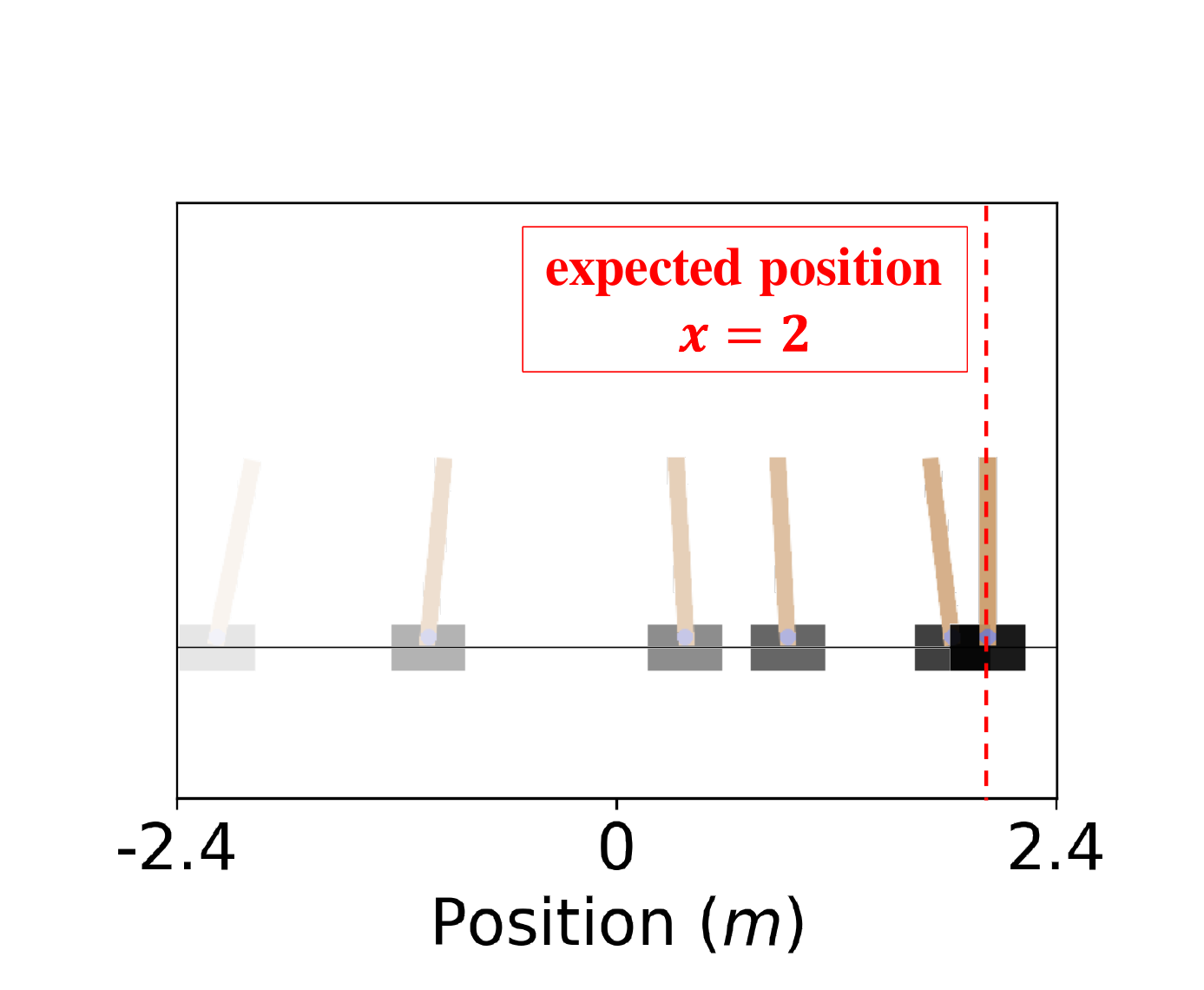}
\label{fig:cartpole_frames_step2}
}
\end{center}
\vspace{-2mm}
\caption{Frames in an episode with $\pi_\text{safe}^*$ and $\pi^*$.}
\vspace{-0.4cm}

\label{fig:frames_cartpole}
\end{figure}

\begin{figure}[!t]
\centering
\subfigure[Positions with $\pi_\text{safe}^*$]{\includegraphics[width=0.5\linewidth]{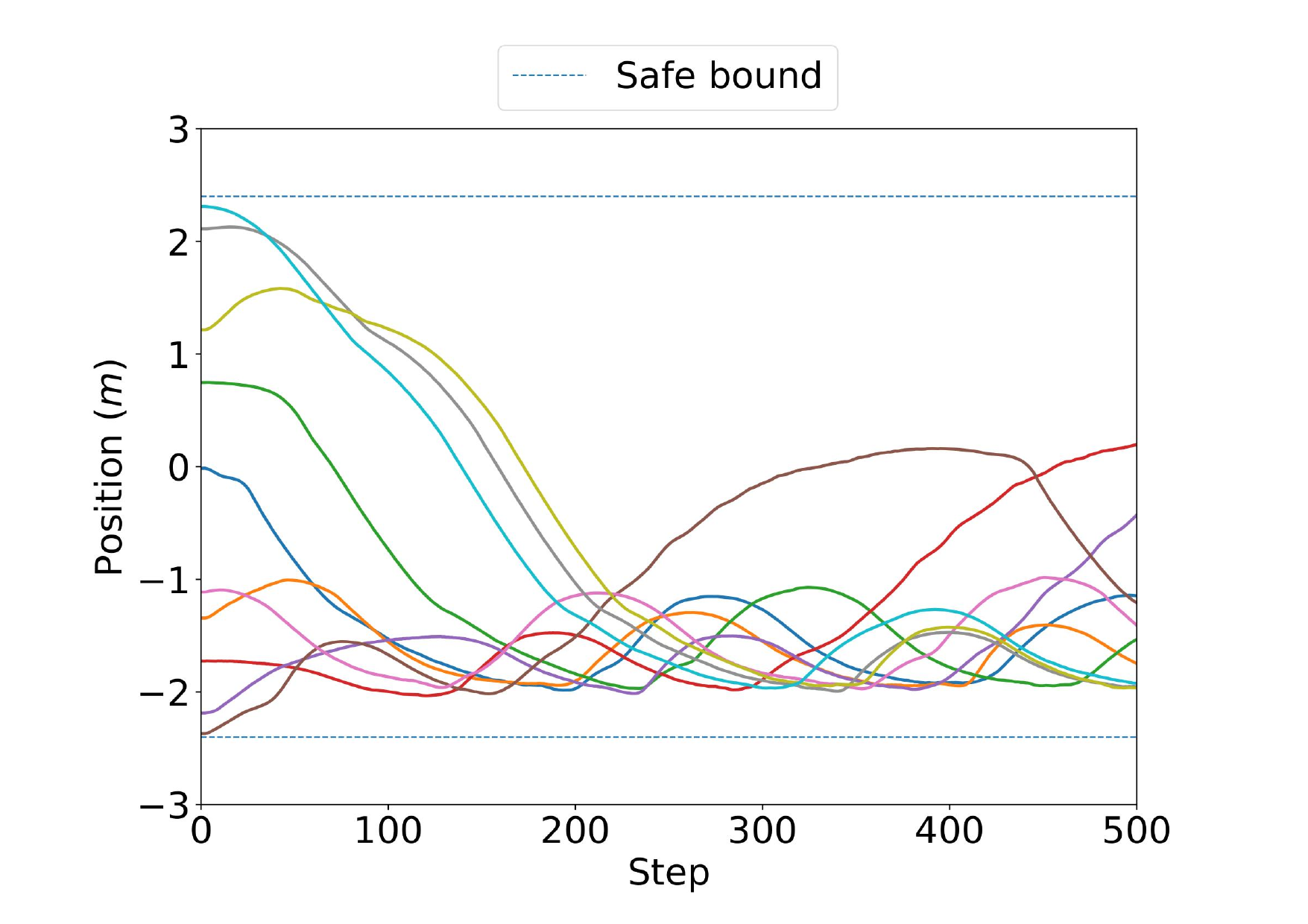}
\label{fig:cartpole_safeposition}
}
\hspace{-5mm}
\subfigure[Angles with $\pi_\text{safe}^*$]{\includegraphics[width=0.5\linewidth]{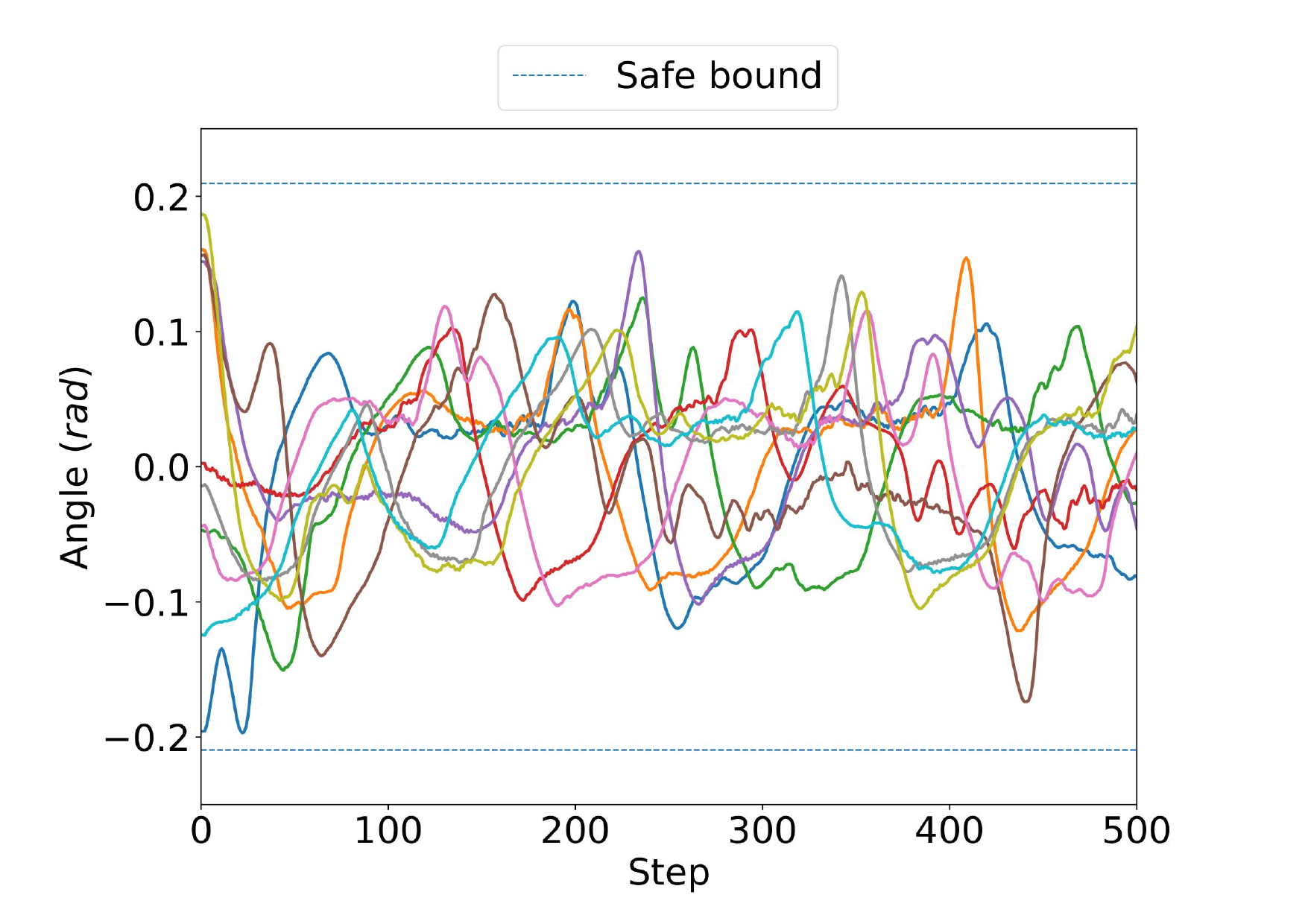}
\label{fig:cartpole_safeangle}
}
\vspace{-3mm}

\subfigure[Positions with $\pi^*$]{\includegraphics[width=0.5\linewidth]{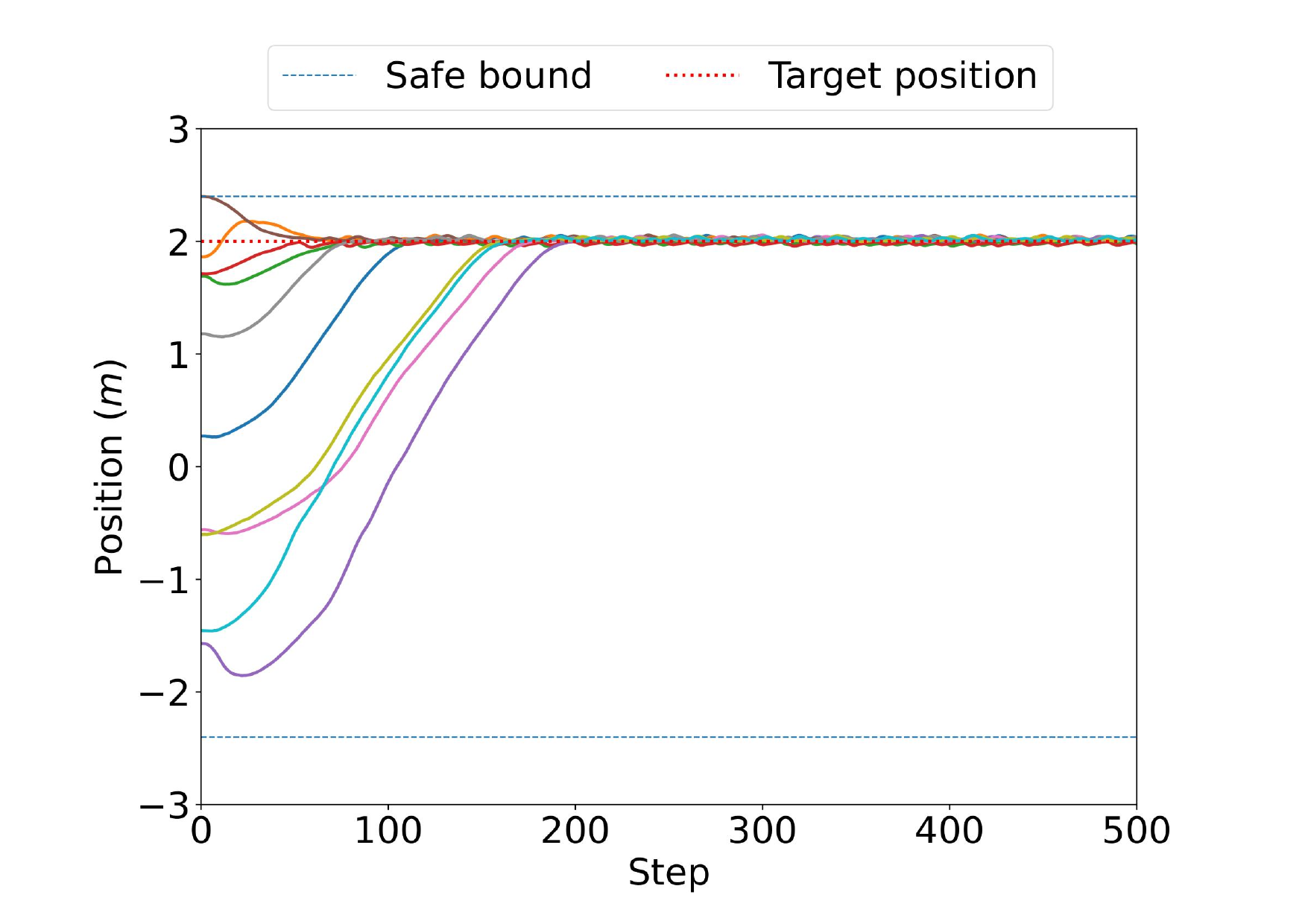}
\label{fig:cartpole_finalposition}
}
\hspace{-5mm}
\subfigure[Angles with $\pi^*$]{\includegraphics[width=0.5\linewidth]{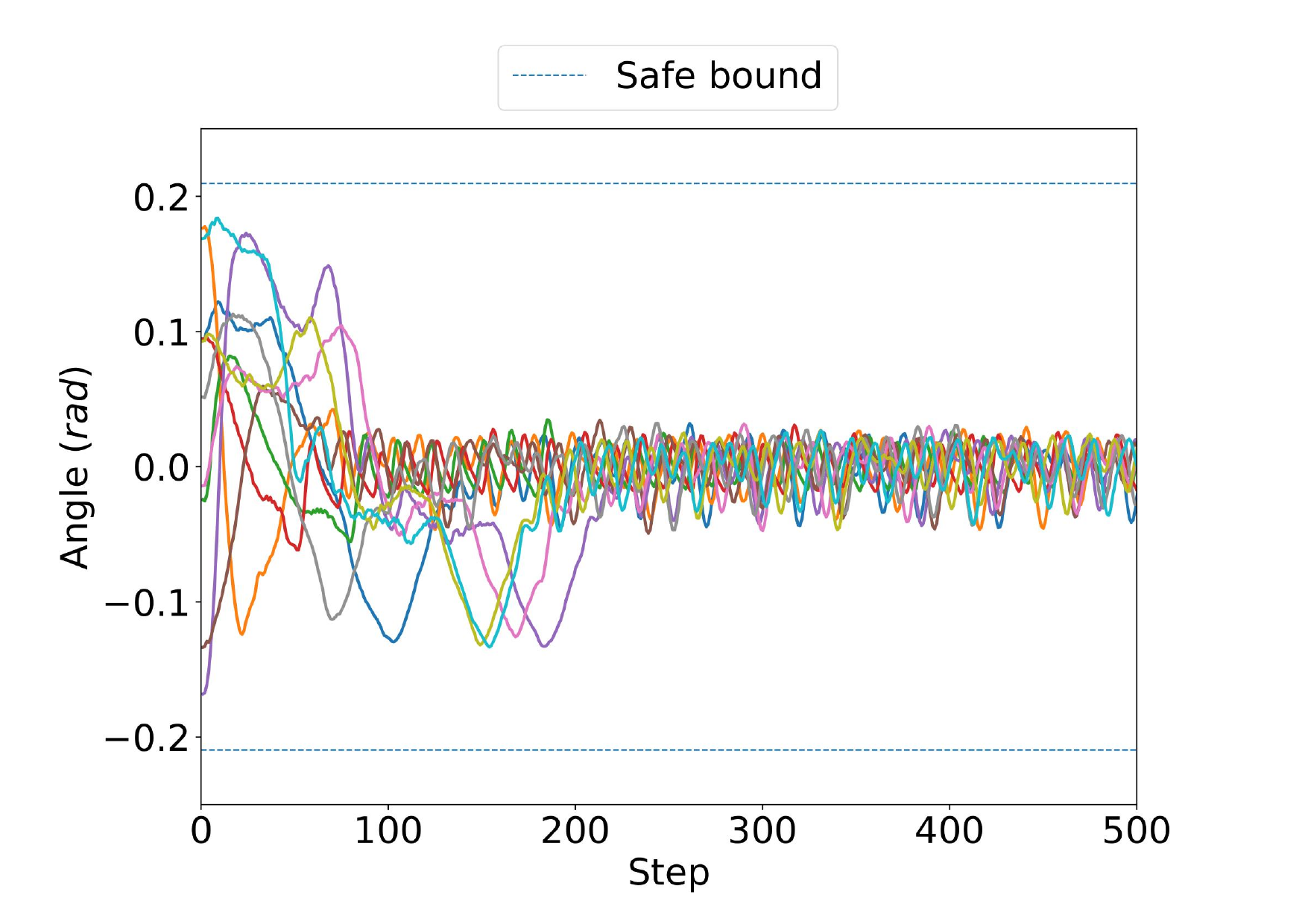}
\label{fig:cartpole_finalangle}
}
\vspace{-3mm}
 \caption{Positions and angles of CartPole with $\pi_\text{safe}^*$ and $\pi^*$ in 10 test episodes. Both policies guarantee the state in the safe range, and the final policy $\pi^*$ drives the CartPole to the target position in safe conditions.}
 \vspace{-4mm}
\end{figure}
\subsection{Continuous CartPole}
\subsubsection{Basic Experiment}\label{sub2sec:cartpole_basic}
CartPole is a typical benchmark for reinforcement learning and control. The task is to balance the pole as long as possible by applying the force on the cart. The state is defined as $s=\left[ \begin{array}{cccc}
    x & v&\theta&\omega 
\end{array}
\right] ^T$, where $x,v$ are the position, velocity of the cart, and $\theta,\omega$ are the angle, angular velocity of the pole respectively. We set the environment with the continuous action space $a\in\left[-1,1\right]$ and the default state space. The navigation  requirements are
\begin{equation*}
\begin{aligned}
    \text{Safety (allowable states): }&-2.4\leq x \leq 2.4
    \\&-12^{\circ}\leq\theta\leq 12^{\circ}\\
    \text{Navigation destination (desired states): }& x=2
    \end{aligned}
\end{equation*}
The initial state is sampled randomly from allowable states.

For safety guaranteed, we design two control barrier functions $h_1(s)=(2.4^2-x^2)/2.4^2$ and $h_2(s)=(12^2-\theta^2)/12^2$. From \eqref{eq_reward-safe} and \eqref{eq_expreward}, the reward of the first stage is $r_1=[\exp(\delta_{h_1})+\exp(\delta_{h_2})]/2$.

\begin{figure}[!t]
\begin{center}
\subfigure[]{\includegraphics[width=0.33\linewidth]{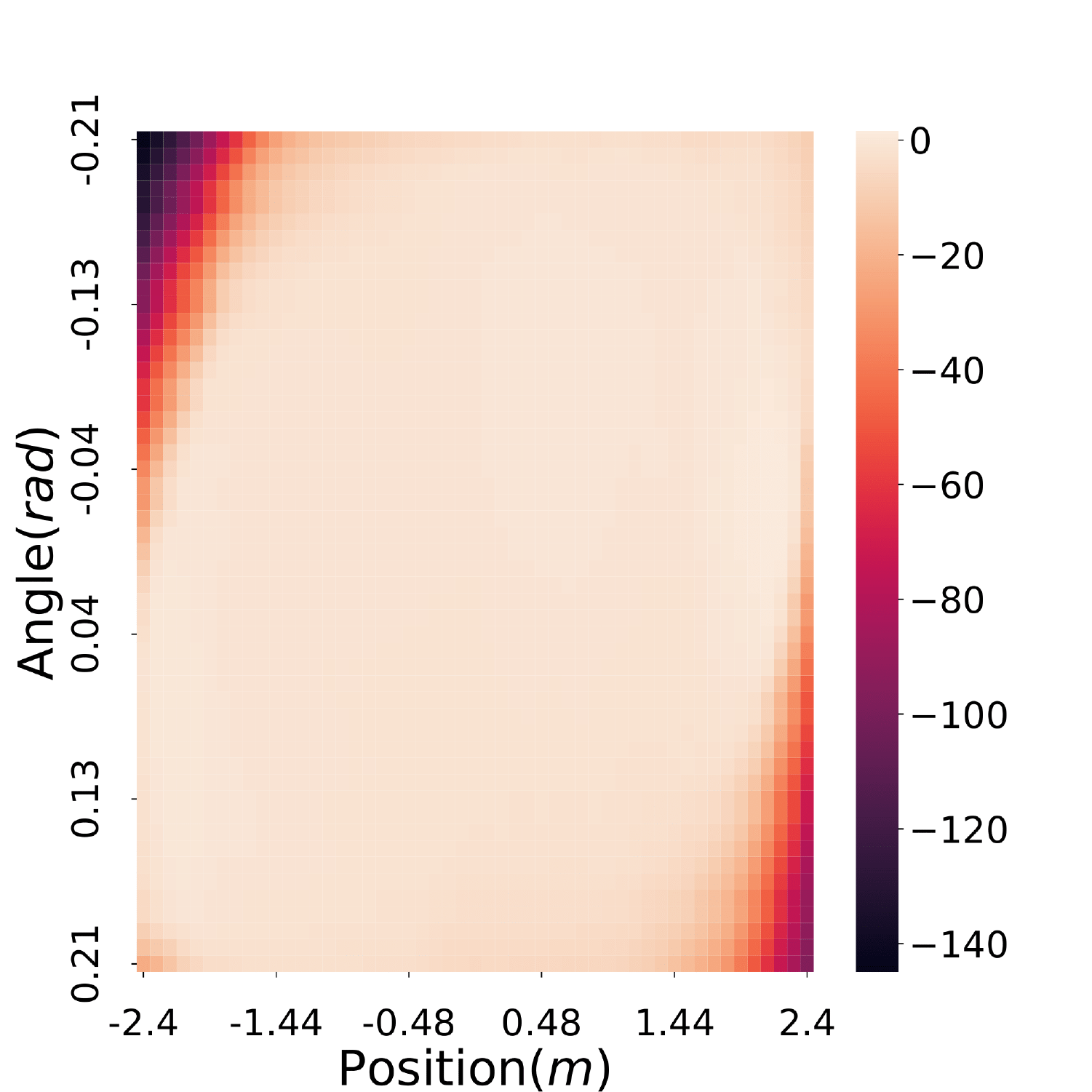}
}
\hspace{-4mm}
\subfigure[]{\includegraphics[width=0.33\linewidth]{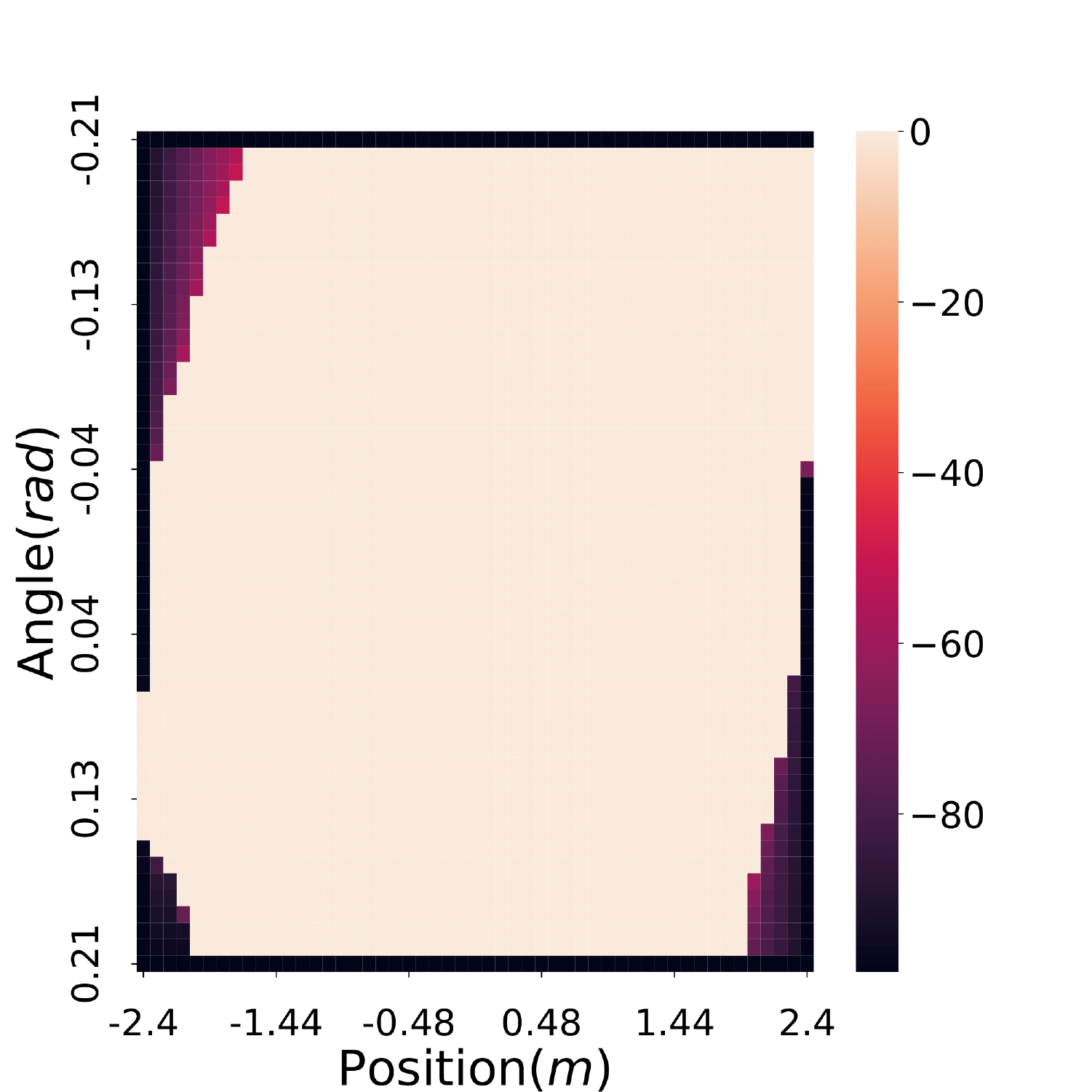}
}
\hspace{-4mm}
\subfigure[]{\includegraphics[width=0.33\linewidth]{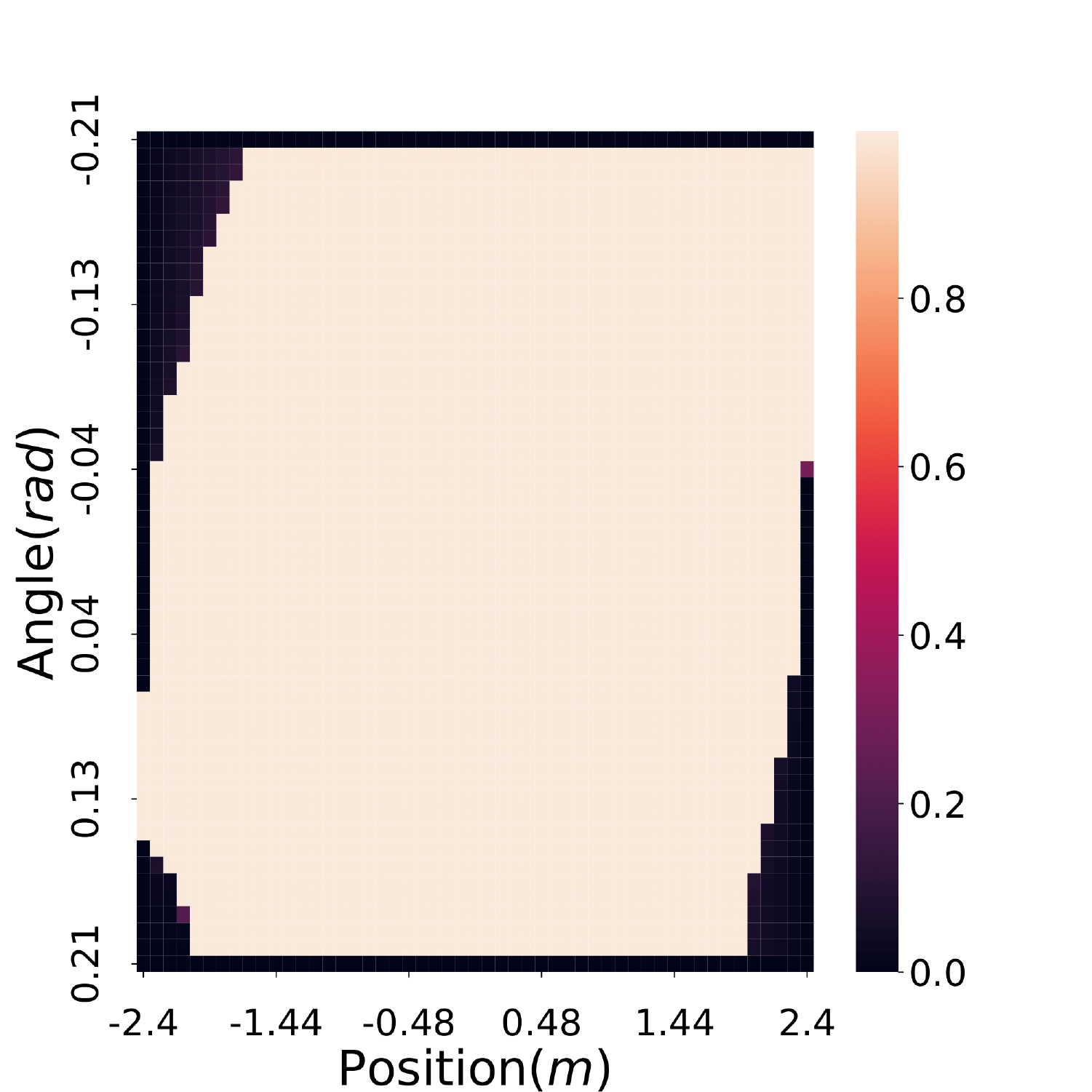}
}
\end{center}
\vspace{-0.2cm}
\caption{Heatmaps of (a) safety critic value, (b) average sampling return and (c) safe rate. Three heatmaps are similar and consistent, which validates that the safety critic is a good safety certificate.}
\label{fig:heatmaps_cartpole}
\end{figure}

\begin{figure}[!t]
\centering
\includegraphics[scale=0.5]{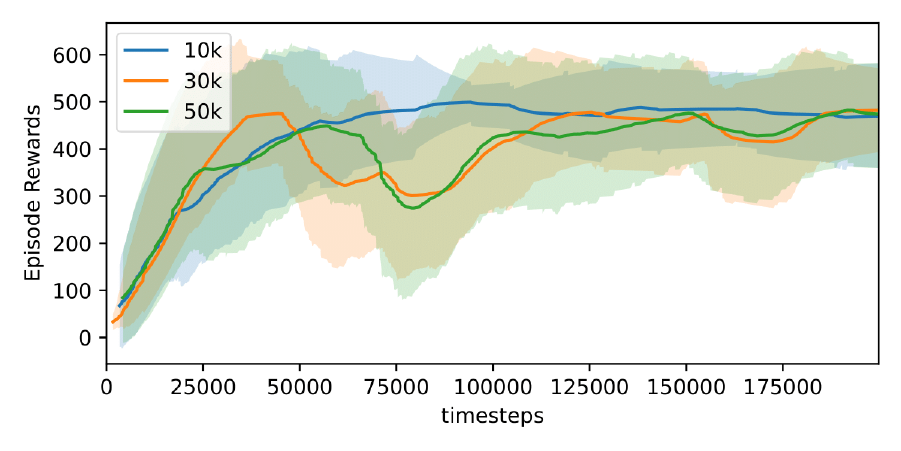}
\caption{Episode rewards on safety (maximum is 500) of the CAC models that are trained with different lengths at the first stage. All models perform almost the same in safety after 200k steps of training at stage 2.}
\vspace{-3mm}
\label{fig:cartpole_training}
\end{figure}

\subsubsection{Verification of Safety Critic $V$}\label{sub2sec:cartpole_verification}

\begin{table}[t]
\caption{Safe Rate in 100 Episodes}
\label{tab:cartpole_saferate}
\begin{center}
\vspace{-4mm}
\begin{tabular}{lcccc}
\toprule
\centering
Length of stage 1 &Stage 1&After 100k steps of stage 2&Stage 2\\
\midrule
10k steps&63\%&99\%&100\%\\
30k steps&95\%&100\%&100\%\\
50k steps&100\%&100\%&100\%\\
\bottomrule
\end{tabular}
\vspace{-0.5cm}

\end{center}
\end{table}

As mentioned in \autoref{thm_safety-critic}, thanks to our well-defined reward, the critic network of the first stage represents the safety of the policy. To verify the conclusion, we compare the safety critic value $V_1(s)$ with the average sampling return and safe episodes rate (an episode is safe if its length is $k_\text{max}=500$) from 10 episodes with all states using $\pi_\text{safe}^*$. The results are depicted in \autoref{fig:heatmaps_cartpole}. Obviously, the three heatmaps are consistent, which verifies the conclusion in \autoref{thm_safety-critic}.

\begin{figure*}[b]
\centering
\subfigure{\includegraphics[width=0.2\linewidth]{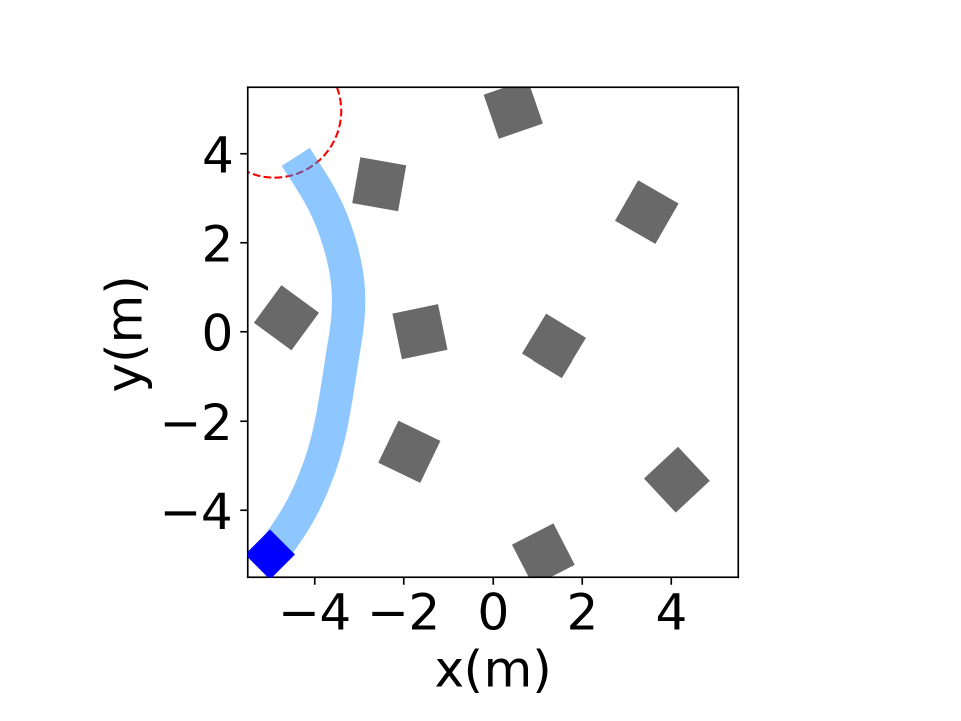}

}
\hspace{-10mm}
\subfigure{\includegraphics[width=0.2\linewidth]{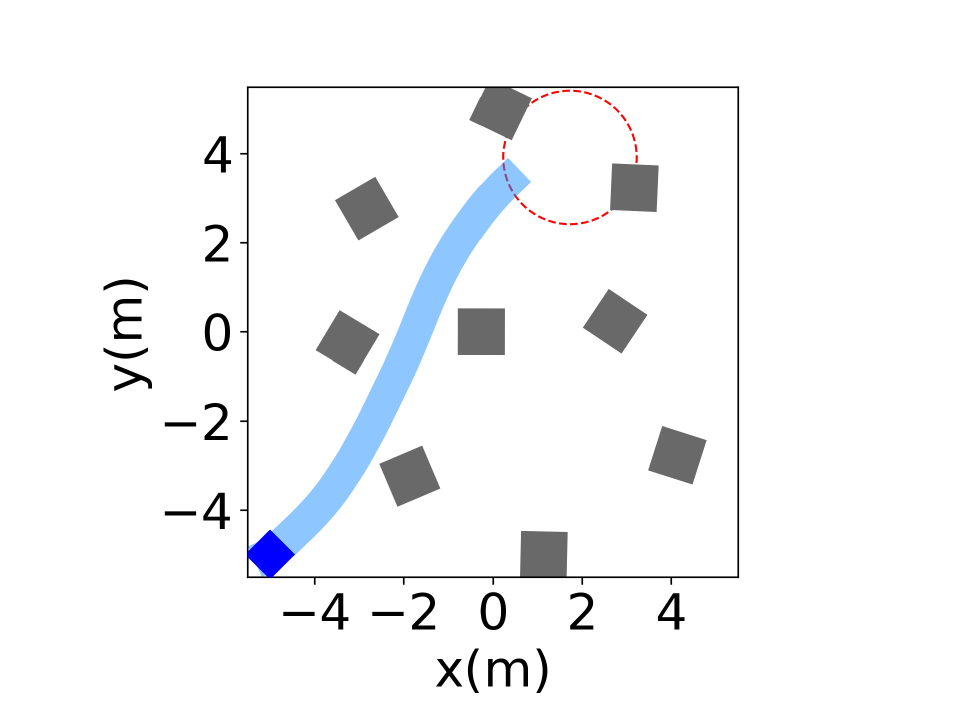}

}
\hspace{-10mm}
\subfigure{\includegraphics[width=0.2\linewidth]{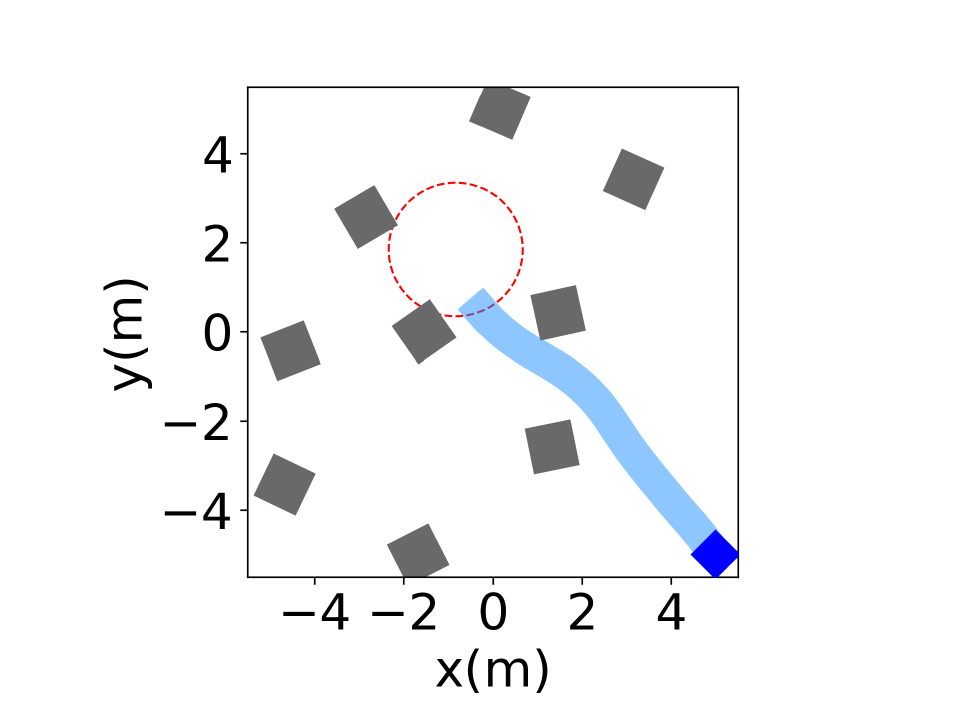}

}
\hspace{-10mm}
\subfigure{\includegraphics[width=0.2\linewidth]{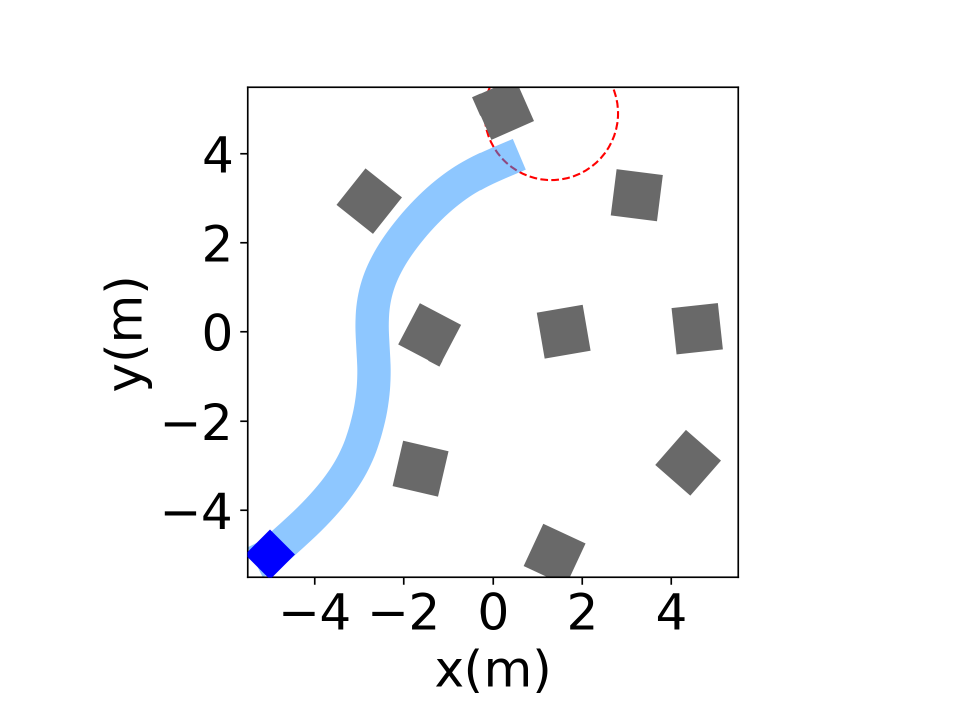}

}
\hspace{-10mm}
\subfigure{\includegraphics[width=0.2\linewidth]{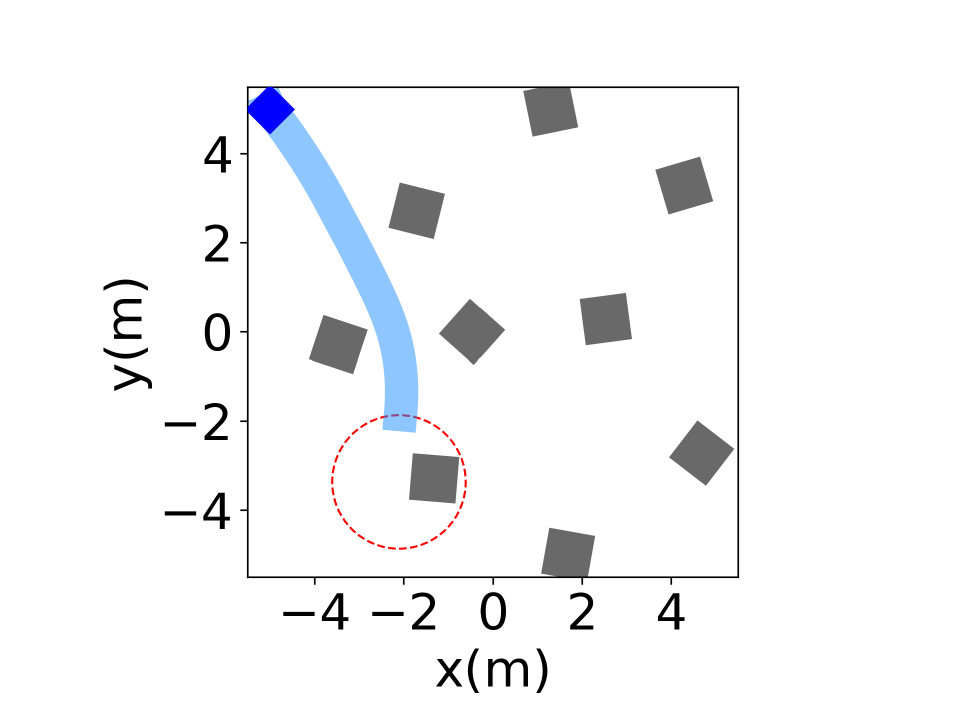}

}

\vspace{-4mm}
\hspace{0.3mm}
\subfigure{\includegraphics[width=0.2\linewidth]{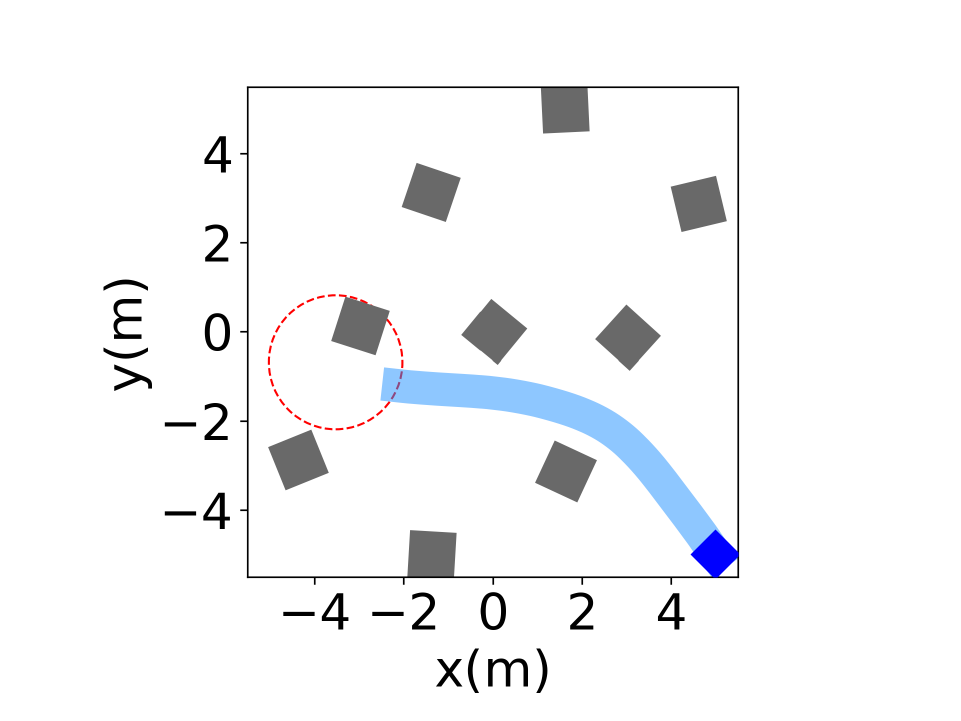}
}
\hspace{-10mm}
\subfigure{\includegraphics[width=0.2\linewidth]{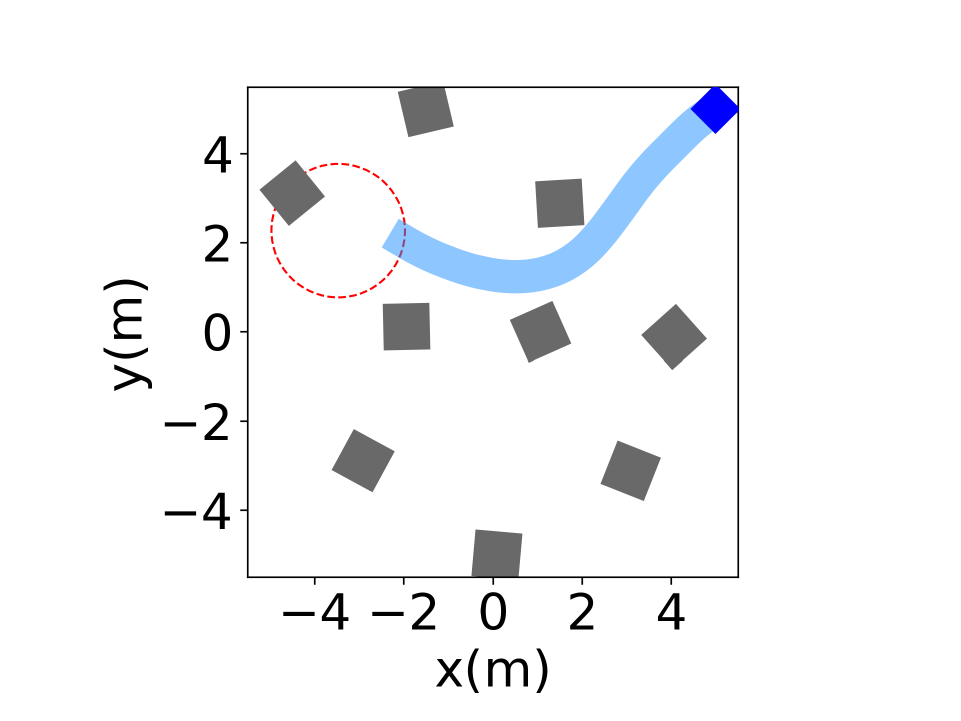}
}
\hspace{-10mm}
\subfigure{\includegraphics[width=0.2\linewidth]{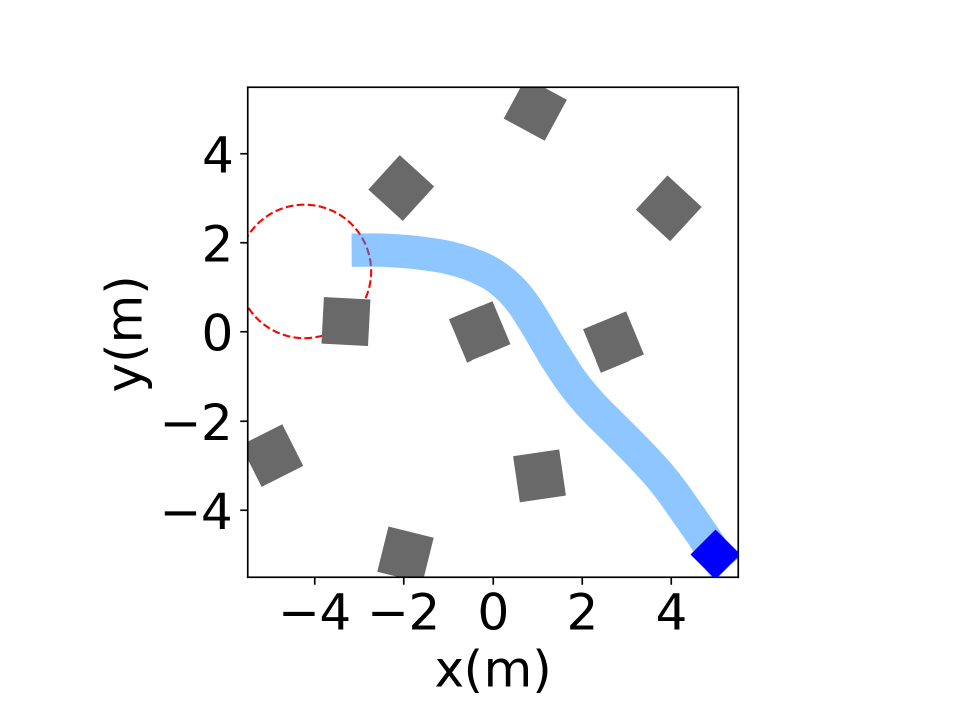}

}
\hspace{-10mm}
\subfigure{\includegraphics[width=0.2\linewidth]{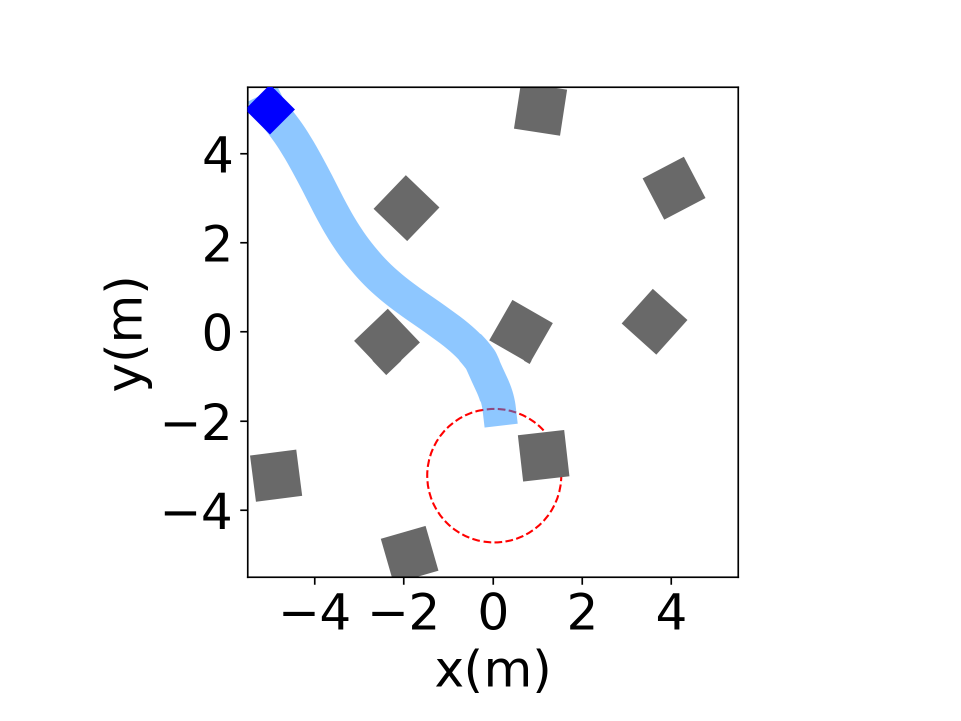}

}
\hspace{-10mm}
\subfigure{\includegraphics[width=0.2\linewidth]{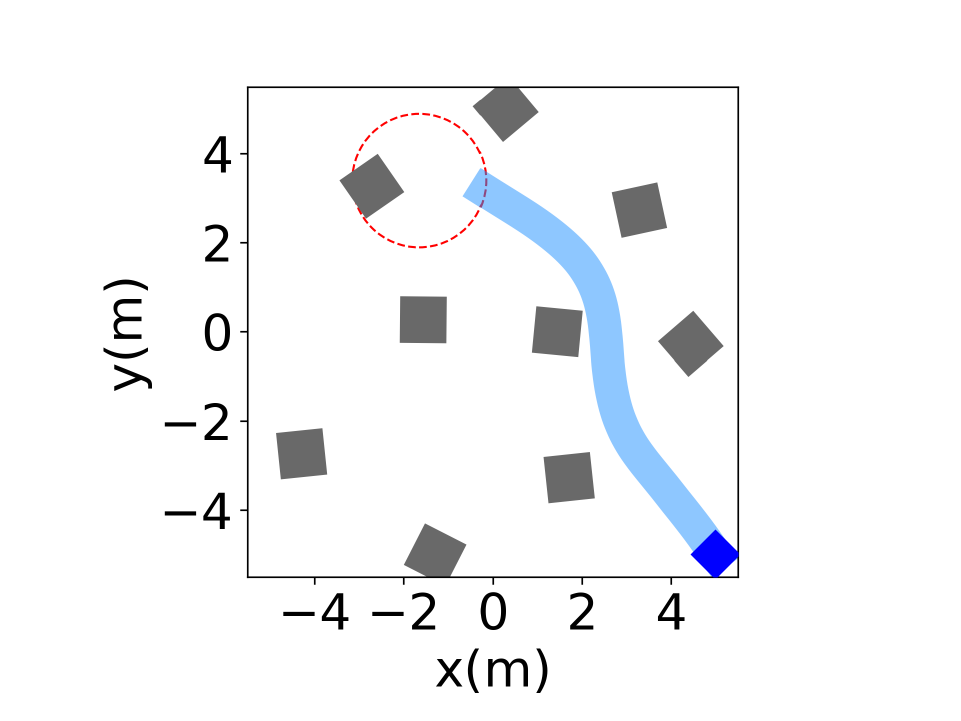}

}
 \caption{Results of test trajectories (light blue). There are 9 random cubic obstacles (grey) in each case. The initial position (dark blue) is set as one of four corners of the pool, and target position (red circle) is generated in the diagonal half of the initial position randomly.}
 \label{fig:traj}
\end{figure*}
Furthermore, we also study the effect of the first stage performance on the final results. Specifically, we respectively train 10k, 30k and 50k steps in the first stage, and 200k steps in the second. The policy networks with 10k and 30k steps do not fully converge in the first stage, but all three final models after the second stage work well. It shows that even if the first stage is not fully trained, the CAC algorithm is still effective with sufficient training steps in the second stage. Moreover, we test the models in the process, which are trained 100k steps in the second stage, and there are significant improvements or maintenance of a high level in safety for all models. shown as \autoref{fig:cartpole_training} and \autoref{tab:cartpole_saferate} .
\begin{figure}[H]
    \centering
    \vspace{2mm}
    \subfigure[Average navigation step reward]{
		\includegraphics[scale=0.38]{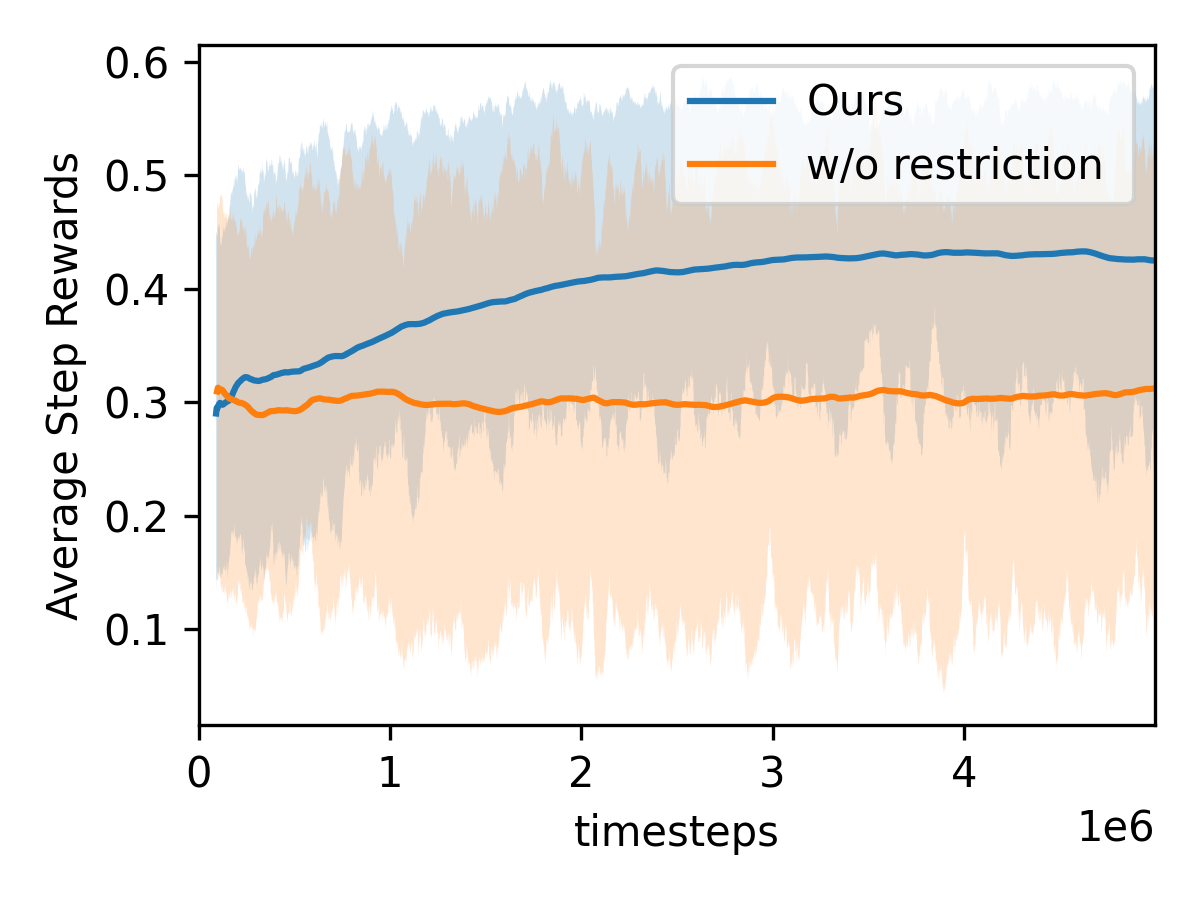}
	}
    \subfigure[Length of episodes]{
\includegraphics[scale=0.38]{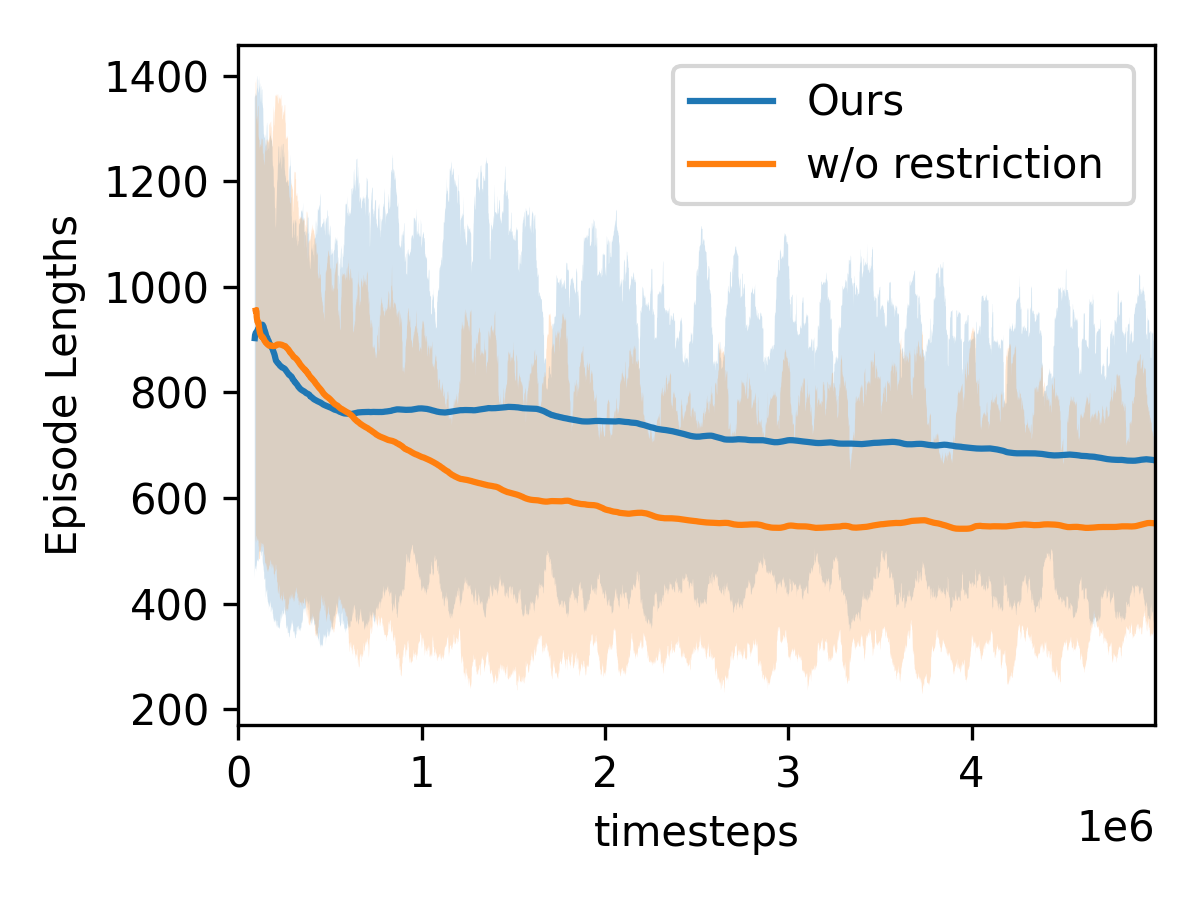}
	}
 \caption{Comparison of average navigation reward per step and length of training episodes in the underwater environment for the CAC models trained with and without policy restriction \eqref{eq_restricted-policy-update}. }
 \label{fig:ab_curve}
\end{figure}
\vspace{-4mm}
\subsection{Autonomous Underwater Vehicle}
To examine the safe navigation capability in complex environments with dense obstacles, we consider an Autonomous Underwater Vehicle (AUV)  navigation task using the HoloOcean simulator \cite{HoloOcean}.

In simulation, the HoveringAUV is kept at a fixed depth. The forward velocity is set to 0.9m/s and the yaw angular velocity range is (-1,1) rad/s. The velocities are followed via a simple PD controller. The AUV is equipped with a RangeFinder sensor with  $120^{\circ}$ FOV, whose measurement is a $\mathbb{R}^9$ vector $\left [ d_1,\cdots,d_9 \right ] $ representing the distances with $15^{\circ}$ interval (the maximum distance is 10 meters). The other observations are the yaw angular velocity $\omega_\gamma$ from the IMU sensor, horizontal velocities $\left [ v_x,v_y \right ]$, the location of AUV $\left [x,y\right ]$, the rotation yaw angle $\gamma$ of AUV in the global frame and the goal location $\left [x_{tar},y_{tar}\right ]$. All observation vectors are concatenated into a $\mathbb{R}^{17}$ vector after normalization.

\begin{figure*}[t]
    \centering
\hspace{-2mm}
\subfigure[]{
\includegraphics[scale=0.22]{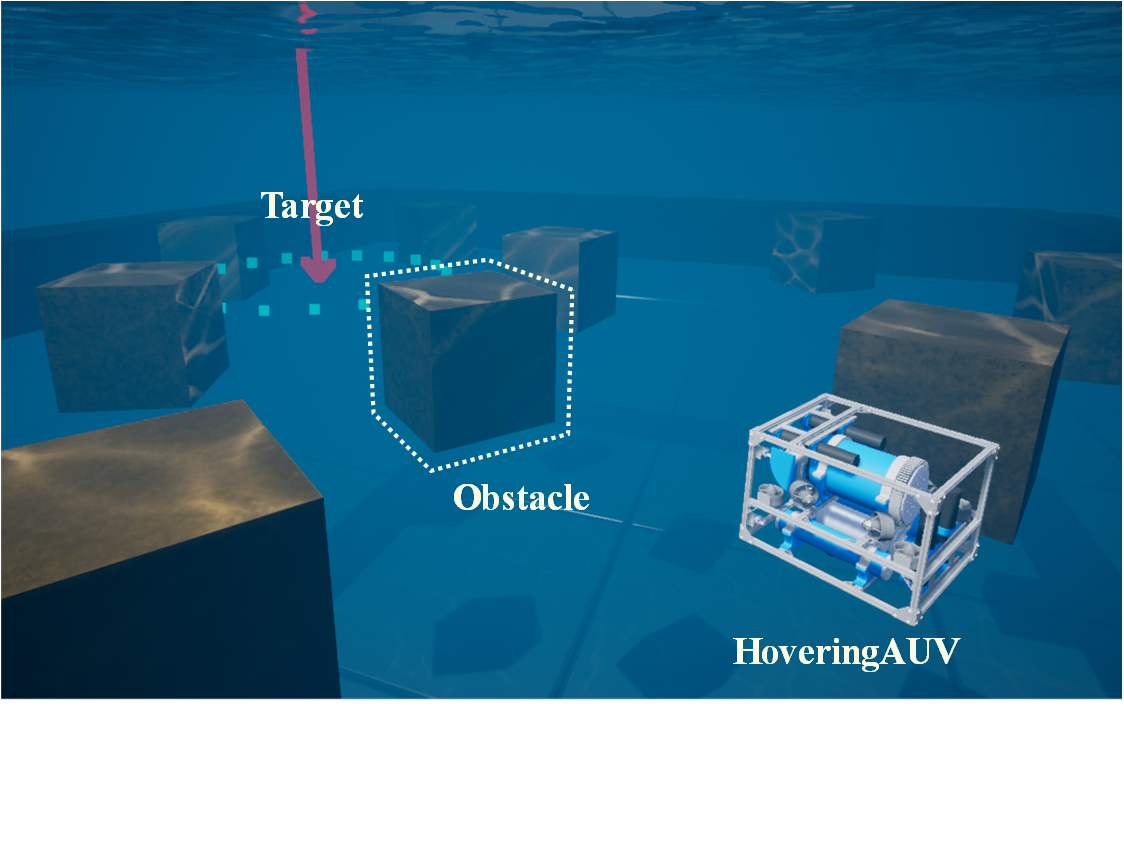}
\label{fig:env_holoocean}
	}
   \hspace{-9mm}
    \subfigure[]{
\includegraphics[scale=0.22]{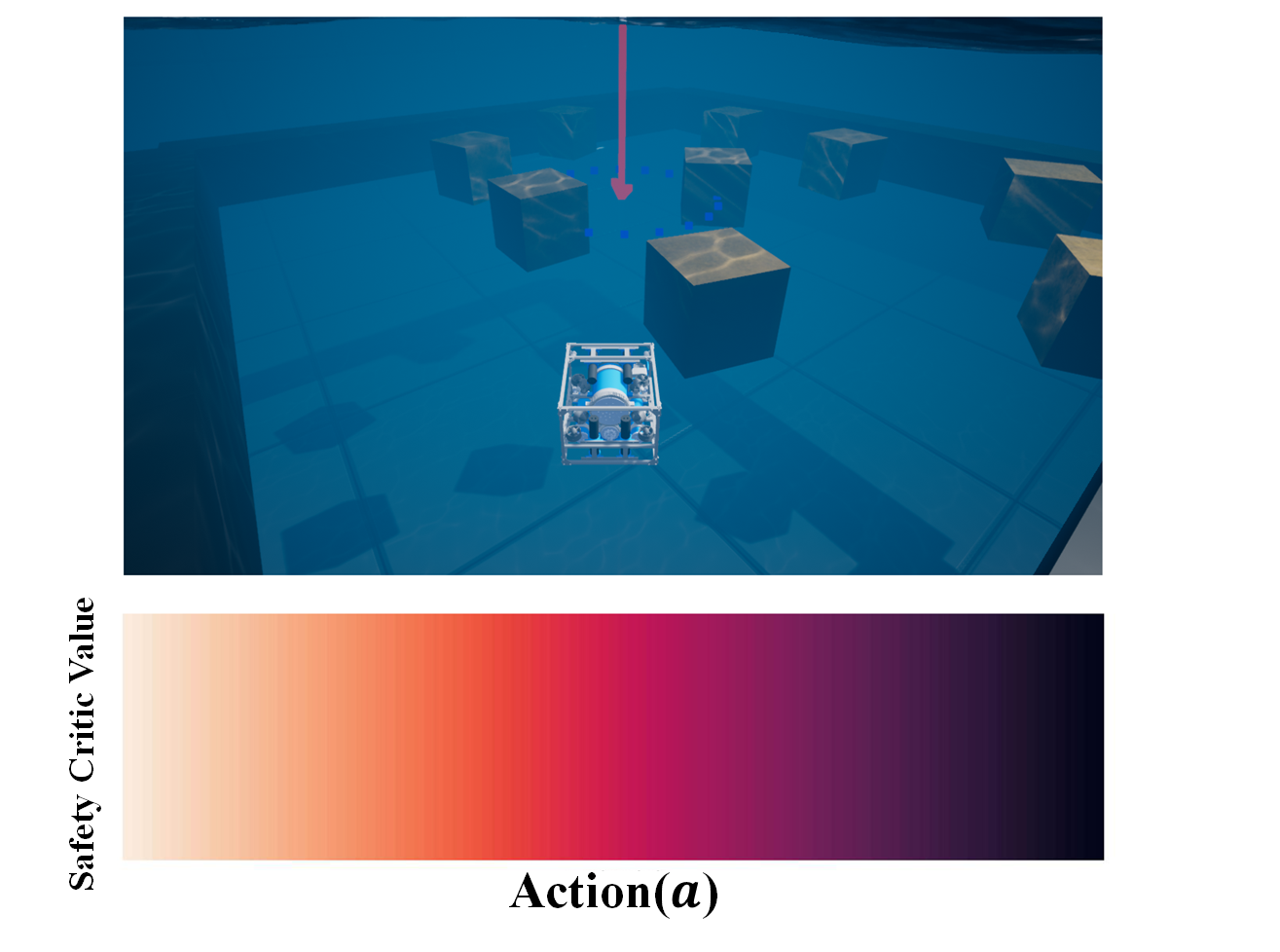}
	}
   \hspace{-13mm}
    \subfigure[]{
\includegraphics[scale=0.22]{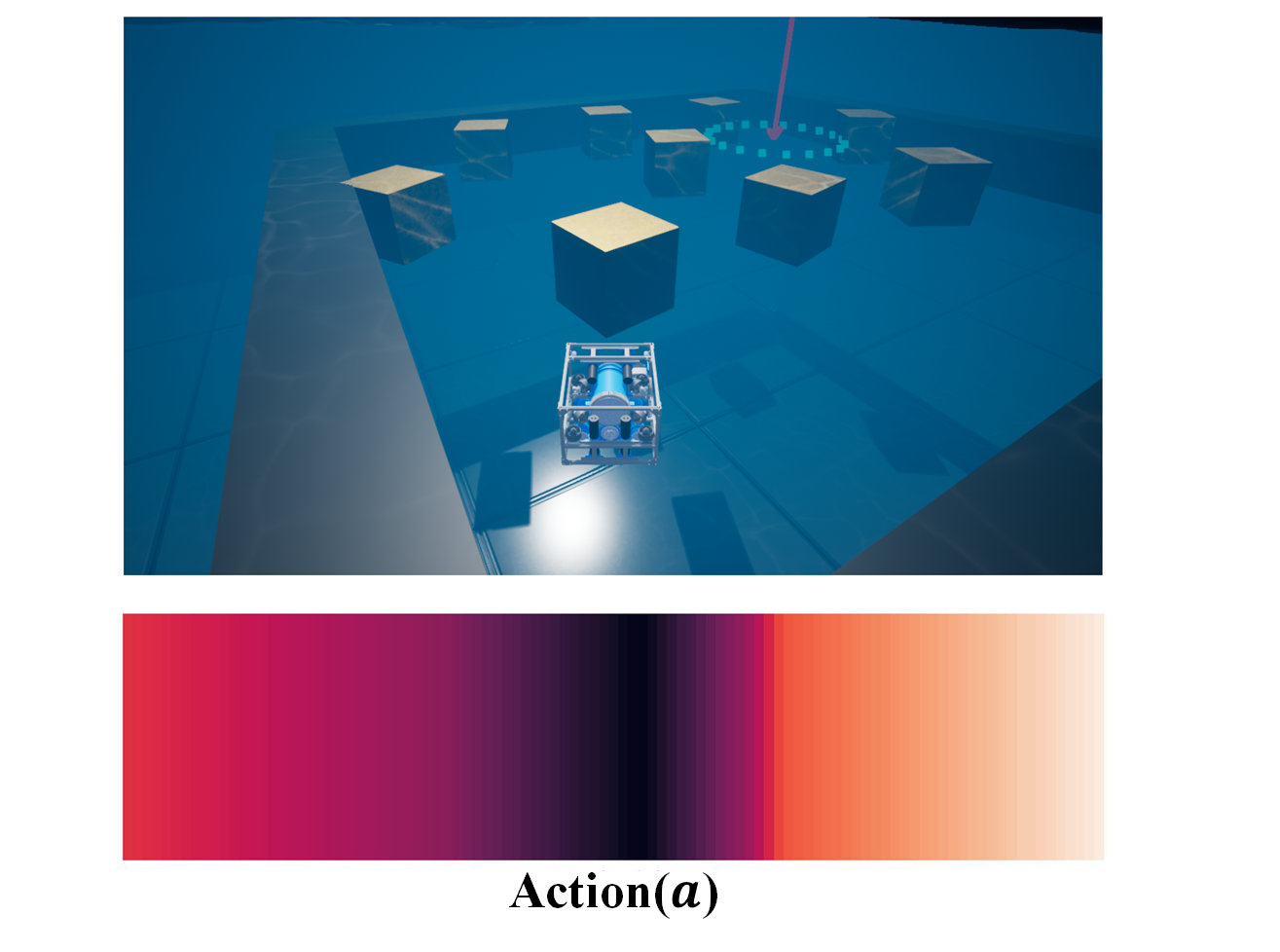}
	}
    \hspace{-13mm}
     \subfigure[]{
\includegraphics[scale=0.22]{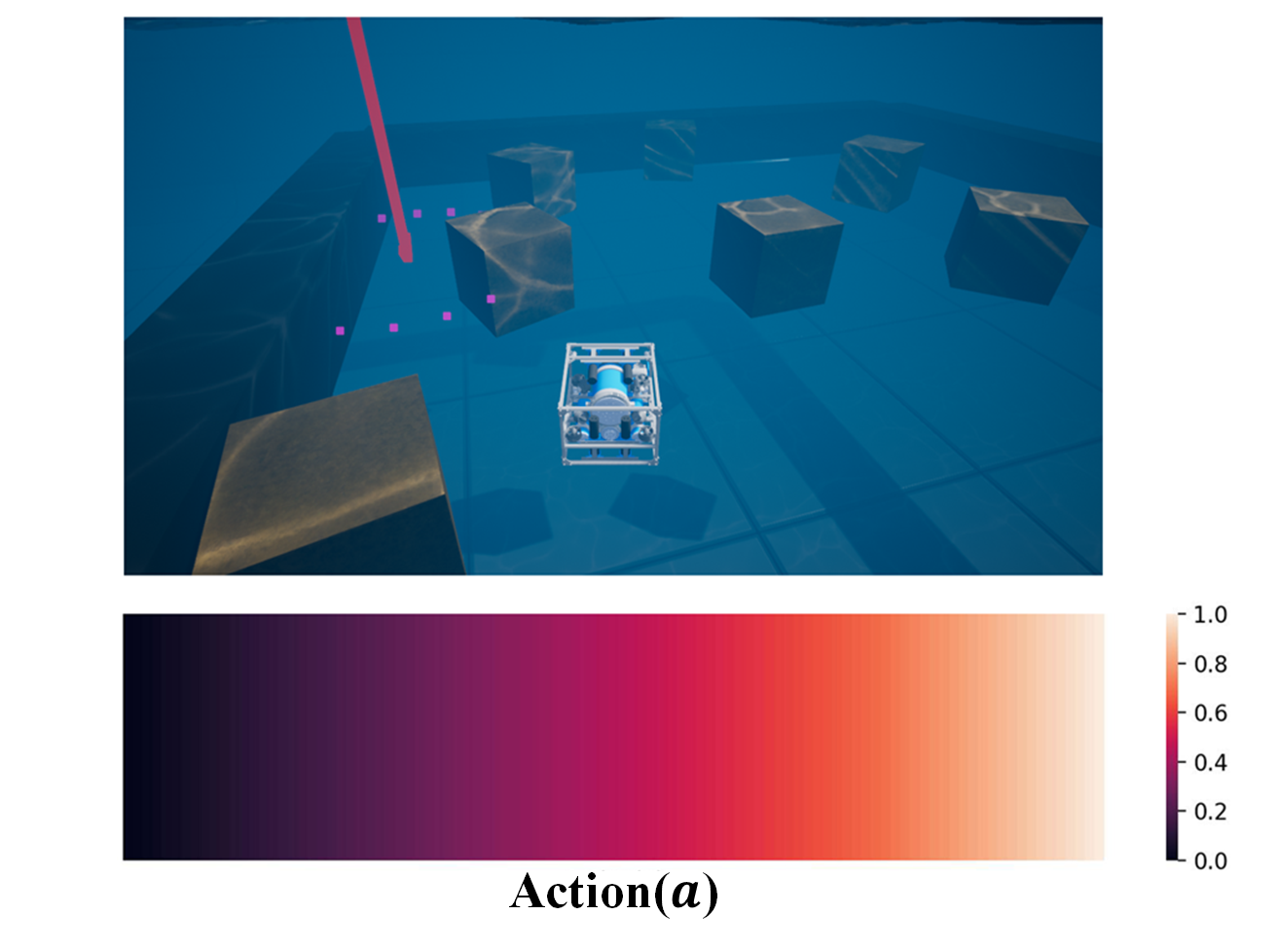}
	}
 \caption{(a): The simulation environment in HoloOcean. (b-d): Results of safety critic Q-values under particular observations. The horizontal coordinate of the heat map represents the action from -1 rad/s to 1 rad/s. The color shade represents the safety critic Q-value for performing the corresponding action under the current observation. The values are normalized to be distinguished obviously.}
 \label{fig:turn}
\end{figure*}

The simulation environment is a $10m \times 10m$ pool, as depicted in \autoref{fig:env_holoocean}. The AUV is initialized randomly at one of the four corners of the pool. The obstacles, which are cubic, are randomly generated in the pool and distributed densely. The target position is randomly generated in the other diagonal half area of the pool where the starting point is not located.  When the distance between the AUV and the target position is less than $1.5m$, we consider reaching the target is achieved.

The CBF used in reward function design for safety is $h(s)=\alpha(min\left \{ {d_i}^2 \right \} -d_{thres}^2), i = 1,\dots,9$, where $d_{thres}$ is the threshold distance for safety, set as $1m$ in the experiment, and $\alpha$ is the coefficient to adjust the magnitude of the results, set as 5. Similarly, the CLF used for navigation task is $l(s)={(x-x_{tar})}^2+{(y-y_{tar})}^2$.

\begin{table}[!t]
\caption{Ablations Results in 100 Random Episodes}
\label{tab:performance}
\begin{center}
\vspace{-4mm}

\begin{tabular}{lcccc}
    \toprule
    {Algorithm} &Success Rate$^a$ $\uparrow$&Collision Rate $\downarrow$&Average length$^b$$\downarrow$\\
    \midrule
    T-O(0.5)$^c$  &48\%  &31\%  &852\\
    T-O(0.25) &67\%  &33\%  &679\\
    Stage 1  &N/A &18\%  &N/A\\
    W/o Re.$^d$ &49\%  &51\%  &\textbf{616}\\
    Our CAC & \textbf{86\%}& \textbf{11\%}&625\\
    \bottomrule
    \vspace{-0.25cm}
\end{tabular}
\begin{tablenotes}
\item a): reaching target position without collision is regarded as a successful episode;
\item b): the average length of successful episodes;
\item c): T-O($x$) represents the reward function is set as a trade-off with $x$ as the coefficient of safety reward, and $1-x$ as that of navigation reward;
\item d): policy update only using $\nabla_\theta J_2(\theta)$ without restriction \eqref{eq_restricted-policy-update}.
\end{tablenotes}
\end{center}
\vspace{-0.4cm}
\end{table}

The critic networks are 3 layers fully connected net with 256 units per layer and actor network has 2 layers of the same structures. Both the first and second stages of our CAC algorithm are trained for 5 million steps. The results of our approach are also shown in \autoref{fig:traj}, from which we can find that the CAC algorithm performs well in both collision avoidance and goal-reaching. Ablation studies are further conducted for four models: the DRL models that use just one reward function as a trade-off between safety and navigation rewards in the form $xr_1+(1-x)r_2$; only stage 1 of the CAC algorithm; the CAC variant that updates the policy using $\nabla_\theta J_2(\theta)$ only 
without consideration of restriction \eqref{eq_restricted-policy-update}. As shown in \autoref{tab:performance}, the CAC algorithm achieves the highest rate of reaching the target while maintaining the lowest rate of collision. A further investigation on how the restriction \eqref{eq_restricted-policy-update} affects the learned policy is conducted. As shown in Fig. \ref{fig:ab_curve}, the integration of policy restriction facilitates learning policy with better performance in reaching navigation targets, which is evident by higher navigation rewards and longer episode length during training.

We verify whether the safety critic is correct in the experiment via various scenarios the AUV would face. The results of three representative scenarios are depicted in \autoref{fig:turn}, which show that the actions that avoid collision, shadowed by lighter colours, have higher critic values and are better choices from a safety perspective. It verifies that our safety critic can correctly reflect the level of safety of AUV navigation behaviours. 

\section{Conclusions}
In this paper, we propose a model-free reinforcement learning algorithm called certificated Actor-Critic for safe robot navigation. We present a well-defined reward function generated by CBFs, and prove the relationship between safety and value functions. Furthermore, we design a hierarchical framework and restricted policy update strategy to realize goal-reaching without compromising safety. Finally, the CartPole and AUV experiments are conducted, showing the superior navigation performance of our algorithm in terms of safety and goal reaching. 



\bibliographystyle{IEEEtran}

\end{document}